\documentclass[10pt]{article}

\usepackage[T1]{fontenc}
\usepackage{amsmath, amsthm, amssymb, amsfonts}
\usepackage{mathtools}
\usepackage{enumitem}
\usepackage[svgnames]{xcolor}
\usepackage{bbm}
\usepackage[margin=1in]{geometry}
\usepackage{mathrsfs}
\usepackage{booktabs}
\usepackage{tikz-cd}
\usepackage{graphicx}
\usepackage{extpfeil}
\usepackage{natbib}
\usepackage{subcaption}

\mathtoolsset{showonlyrefs}

\usepackage[
    textwidth=3cm,
    textsize=small,
    colorinlistoftodos
]{todonotes}

\usepackage{hyperref}
\usepackage[capitalize]{cleveref}

\newcommand{\R}{\mathbb{R}}

\DeclareMathOperator*{\E}{\mathbb{E}}

\renewcommand{\epsilon}{\varepsilon}
\renewcommand{\phi}{\varphi}

\renewcommand{\dh}{d_\mathrm{h}}
\newcommand{\din}{d_\mathrm{in}}
\newcommand{\xemb}{\mathbf{x}_{\mathrm{emb}}}

\numberwithin{equation}{section} 

\theoremstyle{plain}
\newtheorem{theorem}{Theorem}

\newtheorem{lemma}{Lemma}

\newtheorem{assumption}{Assumption}
\newtheorem{condition}{Condition}

\theoremstyle{definition}

\theoremstyle{remark}
\newtheorem{remark}{Remark}

\crefname{lemma}{Lemma}{Lemmas}
\crefname{theorem}{Theorem}{Theorems}
\crefname{prop}{Proposition}{Propositions}
\crefname{corollary}{Corollary}{Corollaries}
\crefname{defn}{Definition}{Definitions}
\crefname{assumption}{Assumption}{Assumptions}
\crefname{remark}{Remark}{Remarks}
\crefname{equation}{Equation}{Equations}

\begin{document}

\title{Training-Free Universal Approximation by Prompting \\ Random Transformers}
\date{}
\author{Alexander Hsu\thanks{A. Hsu (hsu297@purdue.edu) is with the Department of Mathematics, Purdue University.} \and Rongjie Lai\thanks{R. Lai (lairj@purdue.edu) is with the Department of Mathematics, Purdue University.}}
\maketitle

\begin{abstract}
How expressive is prompting a transformer? Answering this question is important for separating the roles of prompting, architecture, and pretraining in transformer models, and for determining whether task-specific behavior must be stored in model weights or can instead be induced at inference time through the prompt. We show, in an approximation-theoretic sense, that pretraining is optional: a single-layer softmax attention network with random, untrained weights can approximate any Hölder function on a compact manifold when steered by an appropriate soft prompt. Guided by the connection between softmax attention and kernel methods, we construct explicit soft prompts—a prompt per target function, independent of the query—as solutions to linear systems matching attention logits to Gaussian kernel exponents, under which the frozen transformer emulates the classical Nadaraya-Watson kernel estimator. The construction requires only a mild rank condition on the weights, which we show holds almost surely under Gaussian initialization. The prompted network inherits the theoretical guarantees of kernel regression, leading to universal approximation theorems with minimax-optimal rates that depend on the intrinsic dimension. We further quantify the cost of prompting, exposing a tradeoff between the norm of the constructed soft prompt tokens, prompt length, and hidden dimension. Numerical experiments corroborate the constructions and predicted rates. 
\end{abstract}

\section{Introduction}\label{sec:intro}

The transformer architecture \citep{Vaswani17} has seen tremendous real-world success, most notably through the rise of the large language model (LLM). These models are massive in scale, with modern frontier models using trillions of parameters and petabytes of training data \citep{Zhao26}. Given the prohibitively expensive pretraining costs, significant efforts have been dedicated to getting more out of existing pretrained models, rather than retraining from scratch. Ideally, general-purpose transformers could generalize to diverse domains, incorporate new information, or even self-improve. 

A high-level approach, commonly employed in agent-forward settings, extends capabilities by allowing transformers access to external resources at inference time via methods such as \emph{retrieval-augmented generation} \citep{Lewis20} or \emph{tool calling} \citep{Schick23}. On the architectural level, (weight) \emph{fine-tuning} and its many variants have proved popular, which work by adapting a subset of weights to different downstream tasks \citep{Houlsby19,Hu22}. 

The focus of this work is to explore \emph{prompt engineering}, broadly construed, which changes the behavior of the model through its inputs \citep{Sahoo24}. Altering the prompt has emerged as a lightweight, flexible method of adapting pretrained transformers to new tasks at inference time. The most straightforward method directly edits the human-language input fed into the model, known as \emph{hard prompting}. Some examples include providing examples in the prompt (\emph{few-shot prompting}, \citet{Brown20}), requiring the model to think step-by-step (\emph{Chain-of-Thought}, \citet{Wei22}), or telling the model to assume a specific role (\emph{role-prompting}, \citet{Kong24}). While these work well heuristically, hard prompting is formally a discrete optimization problem, which makes it difficult from an optimization standpoint \citep{Shin20}. More amenable to mathematical techniques is \emph{soft prompting} (or \emph{prompt tuning}, \citet{Lester21}), which optimizes a prefix in real-valued embedding space, and \emph{prefix-tuning} \citep{Li21}, which appends trainable tokens to the keys and values of every attention layer. Compared to weight-based tuning, these methods tend to require fewer trainable parameters and are easier to edit on the fly \citep{Lester21,Li21}. Despite its widespread adoption, theoretical guarantees for prompt engineering are still developing. 

Questions traditionally asked of neural network weights can equally be asked of prompts. For example, it is natural to wonder how expressive prompting methods are, especially in comparison to weight-based methods \citep{Petrov24b,Meyer25}. Viewing transformers through the lens of control theory \citep{Luo23}, the expressivity of prompting corresponds to understanding reachability \citep{Bhargava23}. Our work focuses on soft prompting in particular, where we formulate the question of expressivity formally through the ideas of approximation theory. One particularly relevant question is that of \emph{universal approximation}: can a transformer with fixed weights be steered to approximate any function in some class simply by changing its prompt? In certain settings, this has been answered in the affirmative. \citet{Wang23} and \citet{Hu25} show prompting can universally approximate Lipschitz sequence-to-sequence functions, and \citet{Petrov24a} derive corresponding quantitative approximation rates for prompting a single attention head model. \citet{Nakada25} study a mechanism behind prompting-based universal approximation, showing prompts can program a transformer to emulate ReLU networks at inference. All these works demonstrate the potential and limitations of prompting, but suffer from a limitation common to approximation-theoretic works: the results rely on particular transformer weights, for which there are no guarantees the training process will find or even approximate. The training dynamics of transformers is an important (and difficult) question, which is only beginning to be studied \citep{Ahn23,Oymak23,Huang24,Zhang24}. 

We consider function regression, but eschew training completely by studying random transformers with frozen, untrained weights. Our work culminates in \cref{thm:univapproxrandomtransformer}, which concerns a very simple transformer model consisting of an affine embedding layer, one softmax single-headed attention layer, and an affine decoding layer, all of which have random Gaussian weights. Even in this extreme setting, we constructively show that for any given H\"{o}lder function in some class, there almost surely exists a soft prompt which, when prepended to any query, makes the transformer approximate the function evaluated at that query, with approximation rate scaling with prompt length. Moreover, we track the constructed soft prompt magnitude, exposing a tradeoff involving prompt length and hidden dimension. 

Our approximation approach is straightforward and intuitive, building on connections between the attention mechanism \citep{Bahdanau15,Vaswani17} and kernel methods \citep{Tsai19,Shen26,Ching26}. In particular, we show that transformers can reproduce the Nadaraya-Watson kernel estimator \citep{Nadaraya64,Watson64} in their forward pass, similar to \citet{Shen26}. Whereas they choose specific weights to algebraically build the estimator through the transformer blocks, we show that an appropriate soft prompt can already force a single softmax attention layer to closely match the estimator's functional form. The soft prompt is derived as the solution to specific linear systems, and we specify a mild rank condition on the weights (\cref{ass:parameters}) under which a solution exists. Using standard techniques from random matrix theory, we show that random Gaussian initializations satisfy this condition almost surely. 

Working through the well-studied Nadaraya-Watson estimator allows us to borrow existing approximation theory bounds for kernel regression. We extend the approximation of the kernel estimator through the soft prompt to full quantitative universal approximation-type theorems for prompting with rate of convergence (Jackson-type), with probabilistic guarantees in the nondeterministic cases. Our results also inherit the beneficial properties of the kernel estimator, such as minimax-optimal rates for noisy regression of H\"{o}lder functions on manifolds \citep{Bickel07} up to logarithmic factors, with primary dependence only on the manifold's intrinsic dimension.

We summarize our results as follows:
\begin{enumerate}
    \item We show that prompting can reproduce the Nadaraya-Watson kernel estimator: given a single-layer softmax attention transformer whose weights satisfy a mild rank condition, and any set of $n$ support points, we construct a soft prompt of prompt length $n$ under which the transformer's output approximates the NW kernel estimator built on those points (\cref{thm:promptingkernel}). The soft prompt is the minimum-norm solution of a linear system matching attention logits to Gaussian kernel exponents; we use \emph{logit shifting} to suppress the unwanted query self-attention term at an exponential rate. 

    \item We show that the rank condition for the weights is generic, satisfied by Gaussian initializations almost surely, which allows the construction to work with almost every random-attention transformer (\cref{thm:promptingkernelrandomattn}). Furthermore, we extend to fully random transformers, where embedding and decoding become affine maps with Gaussian weights (\cref{thm:promptingkernelrandomtransformer}). We conclude similar results in this setting.

    \item In each setting, we produce corresponding quantitative universal approximation theorems for H\"{o}lder functions on a manifold (\cref{thm:univapprox,thm:univapproxrandomattn,thm:univapproxrandomtransformer}). By tuning the kernel bandwidth and logit shift, we achieve the minimax-optimal noisy approximation rate of $\tilde{O}(n^{-2\alpha/(2\alpha+d_X)})$, where $\alpha$ is the H\"{o}lder exponent and $d_X$ is the intrinsic manifold dimension. 

    \item We track the costs of our constructions in terms of the magnitude of the soft prompt tokens, and how it relates to the prompt length $n$ and hidden dimension $\dh$. Prompt token norms grow as $O(n^{2/(2\alpha+d_X)})$ in the deterministic parameter case with fixed embedding and decoding, improving to $O(n^{2/(2\alpha+d_X)}/\sqrt{\dh}+1)$ with random attention, and $O(n^{2/(2\alpha+d_X)}/\dh+\sqrt{\dh})$ for fully random transformers. 
\end{enumerate}

\subsection{Other Related Works}

\paragraph{Single-task and in-context learning}
Traditional models are studied primarily in the setting of \emph{single-task learning}, where each task or function requires its own tailored model. The ability of neural networks to approximate a single function is the foundation of approximation theory in machine learning, and has been studied extensively for various architectures \citep{Cybenko89,Hornik91,Barron93,Yarotsky17}. For transformers in particular, universal approximation was first shown by \citet{Yun20}, and later extended to other settings \citep{Takakura23,Havrilla24,Jiang24}. Relevant to our setting is the expressivity of models with only one attention layer, which has been studied by \citet{Kajitsuka24} and \citet{Jiang24}.

In contrast, transformers also demonstrate an additional remarkable phenomenon known as \emph{in-context learning} (ICL), where a transformer is able to process and apply information provided in the prompt, entirely at inference time with frozen weights \citep{Radford19,Brown20}. The theoretical setting usually considers prompts consisting of a function, described as input-output pairs, which forms the \emph{context}, followed by a separate \emph{query} \citep{Garg22}. ICL can be thought of as the transformer's ability to approximate an operator which maps such prompts to the function evaluated on the query. It must infer the relevant function from the context, then apply it to the query, and output the result, all in the same forward pass. A popular mechanistic explanation of ICL is that transformers, with the appropriate weights, are able to execute certain statistical algorithms at inference, such as gradient descent \citep{Akyurek23,vonOswald23,Ahn23}, direct least-squares \citep{Akyurek23,Bai23}, polynomial and spline regression \citep{Hsu26}, and kernel methods \citep{Han25,Shen26}. Related to our work, \citet{Hu26} show that algorithm parameters can be encoded into the prompt, such that the transformer then emulates the corresponding algorithm in-context.

While our universal approximation theorems are presented in the language of single-task learning, we overlap heavily with the ICL setting. Most notably, all our results involve first fixing a transformer (i.e., a particular realization of the weights), which we then use to regress any function of a certain class by only adapting the soft prompt. The soft prompt is constructed based on the specific realization of the weights and function, but is independent of the chosen query. Much like context in the theoretical ICL setting, it is prepended to the query, which then steers the transformer to perform kernel regression. The main difference is that ICL assumes a fixed algebraic form of the input sequence (input-output pairs followed by query) and chooses the weights to process those, whereas we fix the weights and find a task-dependent soft prompt to achieve the same goal.

\paragraph{Randomness in machine learning}
Many techniques in machine learning leverage randomness. Beyond stochasticity in training, frozen random parameters have proven surprisingly effective and insightful across many domains, while also saving on the computational cost of training. \citet{Rahimi07,Rahimi08} demonstrated the expressivity of random features in kernel methods, allowing kernel machines to scale to large datasets. Similar ideas of random representations were explored in neural networks: for feedforward networks, approaches like \emph{Extreme Learning Machines} \citep{Huang06} and \emph{Random Vector Functional Link} models \citep{Igelnik95} demonstrated that a single trainable linear readout following frozen random layers is sufficient for universal approximation. For sequential data, \emph{reservoir computing} paradigms apply the same strategy to recurrent networks, projecting data into high-dimensional space through a fixed, untrained network (the \emph{reservoir}) and only training a simple readout layer on top of that \citep{Jaeger01,Maass02,teVrugt24}. 

Random and partially random transformers have received growing attention. \citet{Choromanski21} and \citet{Peng21} introduce linear-complexity variants of softmax attention by approximation with random features. \citet{Shen21b} demonstrate one-layer random-weight transformers contain performant subnetworks. Building on reservoir computing ideas, \citet{Shen21a} study the effectiveness of interleaving random layers in a transformer, and \citet{Zhong24} show training only the embeddings of an otherwise random transformer suffices for some nontrivial tasks. \citet{Dong25} discover that random frozen attention performs competitively on language modeling, which they use to analyze the roles of different transformer components. Related to our setting, \citet{Wang25} observed prompt-tuned randomly initialized transformers can effectively memorize finite datasets. 

Our work relies on randomness and mirrors many of the aforementioned ideas. Much like reservoir computing, we also borrow the expressivity of a black box, where the entire frozen untrained transformer network is the reservoir. Instead of training a readout layer, we only control the input to steer the transformer's behavior, and demonstrate that this is expressive enough. Moreover, beyond savings in pretraining cost, we also show how random standard normal weights can lead to smaller soft prompt norms by more effective use of hidden dimension.

\subsection{Summary of Notation}

\begin{table}[h]
\centering
\begin{tabular}{ll}
\hline
\textbf{Notation} & \textbf{Description} \\ \hline
$X$ & Domain (manifold) \\
$d_X$ & Intrinsic manifold dimension \\
$R$ & $\max_{\mathbf{x}\in X}\|\mathbf{x}\|$ \\
$\tau_X$ & Reach of $X$ \\
$\din$ & Input dimension = ambient dimension    \\
$\dh$ & Hidden dimension    \\
$\xemb$ & Embedding vector of $\mathbf{x}$ \\
$\mathcal{G}$ & Transformer class \\ 
$G$ & A transformer \\
$\widetilde{G}$ & The transformer $G$ sans embedding \\
$\alpha$ & H\"{o}lder exponent \\
$L$ & H\"{o}lder constant \\
$S_f$ & $\max_{\mathbf{x}\in X}|f(\mathbf{x})|$ \\
$S_\mathrm{noise}$ & Output noise bound $|\epsilon_i|\leq S_\mathrm{noise}$ \\
$S$ & Combined bound $S_f+S_\mathrm{noise}$ \\
$h$ & Kernel bandwidth \\ 
$\beta$ & Logit shift \\
$n$ & Number of support points = prompt length \\ 
\hline
\end{tabular}
\end{table}

We use $\|\cdot\|$ to denote the $\ell^2$-norm for vectors and the spectral norm for matrices. Boldface letters are reserved for vectors, lowercase regular letters are used for scalars, and matrices are denoted by uppercase regular letters. Throughout, we use the terms \emph{weights} and \emph{parameters} for the entries of the transformer's maps.

\newpage
\section{Preliminaries}

\subsection{Architecture}\label{sec:architecture}
We consider the following hypothesis class ($\mathcal{G}$) of transformer networks: single-layer softmax-attention networks with fixed embedding and decoding maps. We define each component individually below. 

Let $\din$ be the input dimension. The embedding maps each input token into a high-dimensional space of hidden dimension $\dh$ via a homogeneous embedding with additional zero padding, arranged as a row vector. In the deterministic setting (\cref{sec:deterministic}), we need only that $\dh\geq\din+2$; in the later analysis on random versions (\cref{sec:random}) we will assume $\dh\gg\din+2$. 

In this setting, our (hard) prompt will only consist of a single token $\mathbf{x}\in\R^{\din}$, the query point we are predicting the function at, and we denote its embedding as
\begin{equation}\label{eq:homogeneousembedding}
    E(\mathbf{x}) = \xemb  \coloneqq [\mathbf{x}^T,1,0,\ldots,0]\in\R^{1\times \dh}.
\end{equation}

Softmax (self-)attention $A$ acting on a matrix $H\in\R^{(n+1)\times \dh}$ is defined as 
\begin{equation}\label{eq:attn}
    A(H)=\operatorname{softmax}\left(\frac{HW_Q(HW_K)^T}{\sqrt{\dh}}\right)HW_V
\end{equation}
where softmax acts row-wise, with the query, key, and value matrices $W_Q, W_K, W_V\in\R^{\dh\times\dh}$.\footnote{We refer to these as the \emph{attention parameters} to differentiate them from \emph{attention weights}, the standard term for the post-softmax coefficients.} Note that $A$ outputs a matrix of the same size. We only consider single-head attention. 

Since we are only interested in the transformer's prediction for the query $\mathbf{x}$, the decoder $D$ only looks at the last row, where it reads out the bottom-right entry.

The transformer hypothesis class is defined as all compositions of these three components:
\begin{equation}
    \mathcal{G} = \left\{G=D\circ A\circ E \mid W_Q,W_K,W_V\in\R^{\dh\times\dh} \right\}.
\end{equation}
Notice the embedding and decoding are fixed in the current setup; in \cref{sec:randomembdec}, we consider models where these components are also variable.

\subsection{Problem Setup}
We are interested in deriving a universal approximation-type theorem for prompting. Informally, given a function $f:X\to \R$, we wish to find a soft prompt which makes the transformer output approximate $f(\mathbf{x})$ for any given query $\mathbf{x}\in X$.

In more detail, let $G=D\circ A\circ E$ be a transformer network in the aforementioned class $\mathcal{G}$. We denote by $\widetilde{G}=D\circ A$, the part of the network after embedding. We prepend a soft prompt $P=[\mathbf{p}_1, \mathbf{p}_2, \ldots, \mathbf{p}_n]\in\R^{\dh\times n}$ to the embedded $\mathbf{x}$ as
\begin{equation}\label{eq:softpromptedinput}
    H_{P,\mathbf{x}}=\begin{bmatrix}
        \mathbf{p}_1^T \\
        \mathbf{p}_2^T \\
        \vdots \\ 
        \mathbf{p}_n^T \\
        \xemb 
    \end{bmatrix} \in\R^{(n+1)\times \dh}.
\end{equation} 

Our objective is: 
\begin{eqnarray*}
    \text{find a soft prompt} \qquad P=[\mathbf{p}_1, \mathbf{p}_2, \ldots, \mathbf{p}_n]\in\R^{\dh\times n} \nonumber \\
    \text{such that} \qquad \widetilde{G}(H_{P,\mathbf{x}})\approx f(\mathbf{x}) \quad \text{for all}\quad  \mathbf{x}\in X.
\end{eqnarray*} 

The soft prompt should depend on the function being regressed (in particular, the noisy input-output samples of it), but be independent of the specific query.

\subsection{Outline of Proof Strategy}

Let us expand on the informal description of our approach, starting by recalling regression with the \emph{Nadaraya-Watson (NW) kernel estimator}. We assume we have an empirical description of $f$ as noisy input-output pairs: using an unnormalized Gaussian kernel with bandwidth $h>0$, with $n$ support points $\{\mathbf{x}_i\in X\}_{i=1}^n$ and corresponding target values $\{y_i\coloneqq f(\mathbf{x}_i)+\epsilon_i\}_{i=1}^n$, the noisy empirical NW kernel estimator is defined on an $\mathbf{x}\in X$ as 
\begin{equation}\label{eq:kernelformula}
    \widehat{\mathcal{K}}_h(\mathbf{x})=\frac{\sum_{i=1}^n \exp\left(-\frac{\|\mathbf{x}-\mathbf{x}_i\|^2}{2h^2}\right)y_i}{\sum_{i=1}^n \exp\left(-\frac{\|\mathbf{x}-\mathbf{x}_i\|^2}{2h^2}\right)}.
\end{equation}

Given a fixed transformer network in $\mathcal{G}$, we expand the transformer output by definition, assuming an input of the form \eqref{eq:softpromptedinput}. Attention is already a kernel, and we explicitly match, term-by-term, the attention logits with the Gaussian kernel exponents from \eqref{eq:kernelformula}, generating systems of linear equations to determine each $\mathbf{p}_i$. The solvability of these linear systems is guaranteed by a mild condition on the weights, which we show, in later sections, is satisfied almost surely by Gaussian initializations. 

There is an additional term remaining in the transformer output, corresponding to the self-attention contribution of the query token with itself, which does not appear in \eqref{eq:kernelformula}. We suppress this term with \emph{logit shifting}, introducing an additional variable $\beta\geq0$ that inflates the relevant attention logits. This is folded into the linear systems, which we then solve using the minimum-norm solutions. With this, we have shown that given a fixed transformer and any fixed set of kernel support points, we may construct a soft prompt such that the transformer output approximates the output of the empirical kernel estimator.

Leveraging existing results in approximation theory for kernel regression, we extend our construction to complete and constructive universal approximation results. In particular, we bound the contribution of noise to the estimator, then use bounds for the empirical-integral gap of the kernel estimator and the inherent kernel regression bias.

We require the following standard assumptions to derive explicit bounds. For the domain, we restrict ourselves to a compact manifold.

\begin{assumption}[Domain]\label{ass:domain}
    The domain $X$ is a compact $d_X$-dimensional Riemannian manifold ($d_X\geq1$) with positive reach $\tau_X>0$, isometrically embedded in $\R^{\din}$, satisfying $\|\mathbf{x}\|\leq R$ for all $\mathbf{x}\in X$. 
\end{assumption}

The function class is restricted to H\"{o}lder functions on that manifold. 

\begin{assumption}[Function class]\label{ass:function}
    The function class is restricted to $f\in C^{\alpha}_L(X)$, consisting of $\alpha$-H\"{o}lder functions $X\to\R$ with H\"{o}lder constant $L$, where $0< \alpha\leq 1$. 
\end{assumption}

For a fixed such $f$, let $S_f\coloneqq\max_{\mathbf{x}\in X}|f(\mathbf{x})|$. Finally, we assume bounded noise.

\begin{assumption}[Label noise]\label{ass:noise}
    Labels are observed with noise: $y_i=f(\mathbf{x}_i)+\epsilon_i$, where $\epsilon_1, \ldots, \epsilon_n$ are independent, mean-zero, and satisfy $|\epsilon_i|\leq S_{\mathrm{noise}}$ almost surely, independent of the support points $\{\mathbf{x}_i\}_{i=1}^n$ and the query $x$.  
\end{assumption}

Denote by $S$ the combined upper bound $S\coloneqq S_f+S_\mathrm{noise}$. 

\section{Deterministic Weights}\label{sec:deterministic}

\subsection{Prompting can Approximate the NW Kernel Estimator}

We first show, in this subsection, that with an appropriate prompt, the transformer output can approximate the NW kernel estimator \eqref{eq:kernelformula} with any fixed set of support points. In the next subsection, we extend this to a full universal approximation theorem using standard approximation theory results. 

Our approach and subsequent proof will suggest a sufficient condition on the attention parameters, which we state here. Let $W_Q, W_K, W_V$ denote the parameters for the attention layer $A$ of a transformer $G\in\mathcal{G}$. Denote by $W=\frac{1}{\sqrt{\dh}}W_QW_K^T$. Let $W_{1:\din}$ denote the first $\din$ rows of $W$, and $W_{\din+1}$ be the $(\din+1)$-th row. Let $\mathbf{w}_V$ denote the last column of $W_V$. Define the matrix $\widetilde{W}\in\R^{(\din+2)\times \dh}$ as 
\begin{equation}
    \widetilde{W}=\begin{bmatrix} W_{1:\din} \\ W_{\din+1} \\ \mathbf{w}_V^T \end{bmatrix}.
\end{equation}

\begin{condition}[Solvability of prompt]\label{ass:parameters}
    The attention parameters of $G$ are such that $\widetilde{W}$ has full row rank. 
\end{condition}

With this, we state our first theorem.  

\begin{theorem}[Prompting can approximate NW kernel estimator]\label{thm:promptingkernel}
    Let $X$ satisfy \cref{ass:domain} and let $f:X\to\R$ satisfy \cref{ass:function}. Let $n\geq1$ and fix support points $\{\mathbf{x}_i\in X\}_{i=1}^n$ with labels satisfying \cref{ass:noise}; let $\widehat{\mathcal{K}}_h(\mathbf{x})$ denote the Nadaraya-Watson kernel estimator evaluated at $\mathbf{x}$, constructed using these support points and bandwidth $h>0$. Let $G\in\mathcal{G}$ be a transformer network whose attention parameters satisfy \cref{ass:parameters}. For any logit shift $\beta\geq0$, there exists a corresponding soft prompt $P^*$ of length $n$ such that
    \begin{equation}
        \sup_{\mathbf{x}\in X}\left( \widetilde{G}(H_{P^*,\mathbf{x}}) - \widehat{\mathcal{K}}_h(\mathbf{x}) \right)^2 = O\left(\frac{1}{n^2e^{2\beta}}\cdot\exp\left(\frac{3R^2}{h^2}\right)\right),
    \end{equation}
    where $O(\cdot)$ depends on $\|W\|, \|\mathbf{w}_V\|, R$, and $S$. Furthermore, the soft prompt tokens satisfy 
    \begin{equation}
        \|\mathbf{p}_i^*\|=O(\beta+h^{-2})
    \end{equation} 
    for all $1\leq i\leq n$, where $O(\cdot)$ hides a dependence on $\sigma_{\min}(\widetilde{W})^{-1},R,$ and $S$.
    \end{theorem}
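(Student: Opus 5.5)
Write out the last row of the attention output explicitly. With $H=H_{P,\mathbf{x}}$, the last row of $HW_Q(HW_K)^T/\sqrt{\dh}$ has entries $\xemb W\mathbf{p}_i$ for $i\le n$ and $\xemb W\xemb^T$ for the query itself. The decoder reads the last coordinate of the last row of $A(H)$, so
\begin{equation}
    \widetilde{G}(H_{P,\mathbf{x}})=\frac{\sum_{i=1}^n e^{\xemb W\mathbf{p}_i}\,\mathbf{p}_i^T\mathbf{w}_V+e^{\xemb W\xemb^T}\,\xemb\mathbf{w}_V}{\sum_{i=1}^n e^{\xemb W\mathbf{p}_i}+e^{\xemb W\xemb^T}}.
\end{equation}
Since $\xemb=[\mathbf{x}^T,1,0,\ldots,0]$, we have $\xemb W\mathbf{p}_i=\mathbf{x}^TW_{1:\din}\mathbf{p}_i+W_{\din+1}\mathbf{p}_i$. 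Expanding the kernel exponent gives $-\|\mathbf{x}-\mathbf{x}_i\|^2/(2h^2)=-\|\mathbf{x}\|^2/(2h^2)+\mathbf{x}^T\mathbf{x}_i/h^2-\|\mathbf{x}_i\|^2/(2h^2)$, and the first term cancels between numerator and denominator of \eqref{eq:kernelformula}. We therefore impose the $\din+2$ linear equations
\begin{equation}
    \widetilde{W}\mathbf{p}_i=\begin{bmatrix}\mathbf{x}_i/h^2\\ -\|\mathbf{x}_i\|^2/(2h^2)+\beta\\ y_i\end{bmatrix}.
\end{equation}
By \cref{ass:parameters} this system is solvable, and we take $\mathbf{p}_i^*=\widetilde{W}^+(\cdot)$ as the minimum-norm solution. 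Then $\|\mathbf{p}_i^*\|\le\sigma_{\min}(\widetilde{W})^{-1}\bigl(R/h^2+R^2/(2h^2)+\beta+S\bigr)=O(\beta+h^{-2})$, which settles the norm claim.

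With this prompt, the logits become $\xemb W\mathbf{p}_i^*=\beta-\|\mathbf{x}-\mathbf{x}_i\|^2/(2h^2)+\|\mathbf{x}\|^2/(2h^2)$ and the values are $y_i$. Multiply numerator and denominator by $e^{-\beta-\|\mathbf{x}\|^2/(2h^2)}$ and write $k_i=\exp(-\|\mathbf{x}-\mathbf{x}_i\|^2/(2h^2))$, $K=\sum_i k_i$, $N=\sum_i k_iy_i$. The output then becomes $(N+\delta v)/(K+\delta)$, where $v=\xemb\mathbf{w}_V$ and $\delta=\exp\bigl(\xemb W\xemb^T-\beta-\|\mathbf{x}\|^2/(2h^2)\bigr)\ge0$. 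Hence
\begin{equation}
    \frac{N+\delta v}{K+\delta}-\frac{N}{K}=\frac{\delta\,(v-N/K)}{K+\delta},
\end{equation}
so the error is at most $\delta\,(|v|+S)/K$. Here $|v|\le\|\mathbf{w}_V\|\sqrt{R^2+1}$ and $|N/K|\le S$.

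It remains to bound $\delta/K$ uniformly in $\mathbf{x}$. We have $\delta\le e^{-\beta}\exp(\|W\|(R^2+1))$, which is a constant depending on $\|W\|$ and $R$ times $e^{-\beta}$. For the denominator, $\|\mathbf{x}-\mathbf{x}_i\|\le 2R$, so $k_i\ge e^{-2R^2/h^2}$ and $K\ge ne^{-2R^2/h^2}$. Combining,
\begin{equation}
    \bigl(\widetilde{G}-\widehat{\mathcal{K}}_h\bigr)^2\le C\,\frac{e^{4R^2/h^2}}{n^2e^{2\beta}}.
\end{equation}

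This is the main obstacle: the crude bound yields $e^{4R^2/h^2}$, whereas the statement claims $e^{3R^2/h^2}$. The fix is to avoid dividing out $e^{\|\mathbf{x}\|^2/(2h^2)}$ entirely. Keep the unnormalized form and bound the logits directly: each prompt logit satisfies $\beta-\|\mathbf{x}-\mathbf{x}_i\|^2/(2h^2)+\|\mathbf{x}\|^2/(2h^2)\ge\beta-2R^2/h^2+\|\mathbf{x}\|^2/(2h^2)$. Then
\begin{equation}
    \delta/K\le \frac{e^{\|W\|(R^2+1)}}{n}\,e^{-\beta}\,e^{2R^2/h^2-\|\mathbf{x}\|^2/(2h^2)}.
\end{equation}
Alternatively, the $e^{-\|\mathbf{x}\|^2/(2h^2)}$ factor can be spent together with the cross term $\mathbf{x}^T\mathbf{x}_i/h^2\ge-R^2/h^2$ and $-\|\mathbf{x}_i\|^2/(2h^2)\ge-R^2/(2h^2)$. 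This gives exponent $\tfrac32R^2/h^2$, which becomes $3R^2/h^2$ after squaring. That matches the statement, with constants depending on $\|W\|,\|\mathbf{w}_V\|,R,S$. Even if the sharper constant fails, the structure of the proof and the $n^{-2}e^{-2\beta}$ rate are unaffected.
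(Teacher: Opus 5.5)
Your proposal is correct and is essentially the paper's proof. It uses the same pseudoinverse construction with $\beta$ folded into the bias row and the same norm bound via $\|\widetilde{W}^\dagger\|=\sigma_{\min}(\widetilde{W})^{-1}$. Your ``alternatively'' step is exactly the paper's key estimate $\sum_{i=1}^n\exp\bigl(\mathbf{x}^T\mathbf{x}_i/h^2-\|\mathbf{x}_i\|^2/(2h^2)\bigr)\ge ne^{-3R^2/(2h^2)}$, which gives $e^{3R^2/h^2}$ after squaring.

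Commit to that route and drop both the first ``fix'' and the closing hedge. The first fix does not reach the claimed exponent: $\sup_{\mathbf{x}\in X}e^{2R^2/h^2-\|\mathbf{x}\|^2/(2h^2)}=e^{2R^2/h^2}$ if $\mathbf{0}\in X$, because $\|\mathbf{x}-\mathbf{x}_i\|\le 2R$ is too lossy; using $\|\mathbf{x}-\mathbf{x}_i\|\le\|\mathbf{x}\|+R$ would repair it. The hedge is also wrong: the factor $e^{R^2/h^2}$ separating your two bounds depends on $h$, so it cannot be absorbed into an $O(\cdot)$ whose constant depends only on $\|W\|,\|\mathbf{w}_V\|,R,S$. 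It also matters downstream, where $\beta=\tfrac{3R^2}{2}n^{2/(2\alpha+d_X)}$ is chosen to cancel exactly $e^{3R^2/h^2}$.
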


    \begin{proof}
    Expanding the definition of attention \eqref{eq:attn}, we first find that the last row of $A(H_{P,\mathbf{x}})$ is
    \begin{equation}\label{eq:attnoutput}
    A(H_{P,\mathbf{x}})_{n+1} = \frac{\sum_{i=1}^n \exp\left(\frac{1}{\sqrt{\dh}}\xemb  W_Q W_K^T \mathbf{p}_i\right) \mathbf{p}_i^T W_V + \exp\left(\frac{1}{\sqrt{\dh}}\xemb  W_Q W_K^T \xemb ^T\right) \xemb  W_V}{\sum_{i=1}^n \exp\left(\frac{1}{\sqrt{\dh}}\xemb  W_Q W_K^T \mathbf{p}_i\right) + \exp\left(\frac{1}{\sqrt{\dh}}\xemb  W_Q W_K^T \xemb ^T\right)}.
    \end{equation}
    
    To get the whole $\widetilde{G}(H_{P,\mathbf{x}})$ we decode the attention output, reading off the last coordinate. Mathematically, this corresponds to right multiplying the above output by the $\dh$-th standard basis vector $\mathbf{e}_{\dh}=[0,\ldots, 0, 1]^T$. 
    
    Note that the attention contribution of $\xemb $ with itself is independent of $\mathbf{p}_i$. Let us temporarily ignore this part, leaving the following output after decoding:
    \begin{equation}\label{eq:simplifiedattnoutput}
    \frac{\sum_{i=1}^n \exp\left(\frac{1}{\sqrt{\dh}}\xemb  W_Q W_K^T \mathbf{p}_i\right) \mathbf{p}_i^T W_V \mathbf{e}_{\dh}}{\sum_{i=1}^n \exp\left(\frac{1}{\sqrt{\dh}}\xemb  W_Q W_K^T \mathbf{p}_i\right)}.
    \end{equation}
    
    We recall the kernel estimator \eqref{eq:kernelformula} here:
    \begin{equation}
    \widehat{\mathcal{K}}_h(\mathbf{x})=\frac{\sum_{i=1}^n \exp\left(-\frac{\|\mathbf{x}-\mathbf{x}_i\|^2}{2h^2}\right)y_i}{\sum_{i=1}^n \exp\left(-\frac{\|\mathbf{x}-\mathbf{x}_i\|^2}{2h^2}\right)}.
    \end{equation}

    We rewrite this in a more convenient form. For each $i$, the exponents expand as 
    \begin{equation}
    -\frac{\|\mathbf{x}-\mathbf{x}_i\|^2}{2h^2} = -\frac{\|\mathbf{x}\|^2}{2h^2} + \frac{\mathbf{x}^T \mathbf{x}_i}{h^2} - \frac{\|\mathbf{x}_i\|^2}{2h^2},
    \end{equation}
    whereupon we may split off the first part from the exponentiation:
    \begin{equation}
    \exp\left(-\frac{\|\mathbf{x}-\mathbf{x}_i\|^2}{2h^2}\right) = \exp\left(-\frac{\|\mathbf{x}\|^2}{2h^2}\right) \exp\left(\frac{\mathbf{x}^T \mathbf{x}_i}{h^2}-\frac{\|\mathbf{x}_i\|^2}{2h^2}\right).
    \end{equation}
    
    The first term is independent of the index $i$, thus may be factored out of the numerator and denominator and canceled. This leaves the mathematically equivalent formula of the kernel estimator:
    \begin{equation}\label{eq:newkernelformula}
    \widehat{\mathcal{K}}_h(\mathbf{x})=\frac{\sum_{i=1}^n \exp\left(\frac{\mathbf{x}^T \mathbf{x}_i}{h^2}-\frac{\|\mathbf{x}_i\|^2}{2h^2}\right)y_i}{\sum_{i=1}^n \exp\left(\frac{\mathbf{x}^T \mathbf{x}_i}{h^2}-\frac{\|\mathbf{x}_i\|^2}{2h^2}\right)},
    \end{equation}
    where $y_i=f(\mathbf{x}_i)+\epsilon_i$. 
    
    Comparing the attention output in \eqref{eq:simplifiedattnoutput} to the above formula, we see that the soft prompt should satisfy two properties. First, we should have $\mathbf{p}_i^T W_V \mathbf{e}_{\dh} = y_i$ for all $i$, i.e. the projected values align with the true target values. $W_V \mathbf{e}_{\dh}$ is the last column of $W_V$, which we denote $\mathbf{w}_V$. We rewrite the condition as $\mathbf{w}_V^T \mathbf{p}_i=y_i$.
    
    We also want to match the kernel part, which involves matching the attention logits with the exponents for all $i$, i.e. 
    \begin{equation}\label{eq:kernelmatching}
    \frac{1}{\sqrt{\dh}}\xemb  W_Q W_K^T \mathbf{p}_i =  \frac{\mathbf{x}^T \mathbf{x}_i}{h^2} - \frac{\|\mathbf{x}_i\|^2}{2h^2}.
    \end{equation}
    
    Denote by $W=\frac{1}{\sqrt{\dh}}W_QW_K^T$. Note that due to the zero padding in $\mathbf{x}_\mathrm{emb}$, we effectively only consider the first $\din+1$ rows of $W$. We denote the first $\din$ rows as $W_{1:\din}$, and the $(\din+1)$-th row as $W_{\din+1}$. The attention logit rewrites as 
    \begin{equation}
        \xemb W\mathbf{p}_i = \mathbf{x}^TW_{1:\din}\mathbf{p}_i+1\cdot W_{\din+1}\mathbf{p}_i.
    \end{equation}
        
    We match respective parts with the RHS of \eqref{eq:kernelmatching}:
    \begin{equation}\mathbf{x}^T (W_{1:\din} \mathbf{p}_i) = \frac{\mathbf{x}^T \mathbf{x}_i}{h^2},
    \end{equation}
    for which it suffices to enforce 
    \begin{equation}
        W_{1:\din} \mathbf{p}_i=\frac{\mathbf{x}_i}{h^2}
    \end{equation}
    and
    \begin{equation}\label{eq:biasnormmatching}
        W_{\din+1}\mathbf{p}_i=-\frac{\|\mathbf{x}_i\|^2}{2h^2}.
    \end{equation}

    This is the purpose of the homogeneous coordinate: the bias allows us to absorb scalar shifts. It also allows us to handle the contribution of the query's attention with itself, which we have hitherto ignored, using logit shifting—recalibrating attention by artificially inflating certain logits. Adding decoding and rewriting equation \eqref{eq:attnoutput} with our new notation, we get
    \begin{equation}
        \widetilde{G}(H_{P,\mathbf{x}}) = \frac{\sum_{i=1}^n\exp(\xemb W \mathbf{p}_i)\mathbf{p}_i^T\mathbf{w}_V+\exp(\xemb W\xemb^T)\xemb\mathbf{w}_V}{\sum_{i=1}^n\exp(\xemb W \mathbf{p}_i)+\exp(\xemb W\xemb^T)}.
    \end{equation}
    
    Notice the effect of adding a constant $\beta\geq0$ to the logits which involve the soft prompt tokens:
    \begin{align}
        &\frac{\sum_{i=1}^n\exp(\xemb W \mathbf{p}_i+\beta)\mathbf{p}_i^T\mathbf{w}_V+\exp(\xemb W\xemb^T)\xemb\mathbf{w}_V}{\sum_{i=1}^n\exp(\xemb W \mathbf{p}_i+\beta)+\exp(\xemb W\xemb^T)} \\
        &= \frac{e^\beta\sum_{i=1}^n\exp(\xemb W \mathbf{p}_i)\mathbf{p}_i^T\mathbf{w}_V+\exp(\xemb W\xemb^T)\xemb\mathbf{w}_V}{e^\beta\sum_{i=1}^n\exp(\xemb W \mathbf{p}_i)+\exp(\xemb W\xemb^T)} \\
        &= \frac{\sum_{i=1}^n\exp(\xemb W \mathbf{p}_i)\mathbf{p}_i^T\mathbf{w}_V+\frac{1}{e^\beta}\exp(\xemb W\xemb^T)\xemb\mathbf{w}_V}{\sum_{i=1}^n\exp(\xemb W \mathbf{p}_i)+\frac{1}{e^\beta}\exp(\xemb W\xemb^T)}.
    \end{align}

    For large $\beta$, the attention contribution of the query with itself is negligible. We may realize this constant shift in the bias term by modifying condition \eqref{eq:biasnormmatching} to

    \begin{equation}
        W_{\din+1}\mathbf{p}_i=-\frac{\|\mathbf{x}_i\|^2}{2h^2}+\beta.
    \end{equation}
    
    This comes at the cost of a larger (in magnitude) soft prompt; we will formally bound this later. Adding the additional value matching condition $\mathbf{w}_V^T \mathbf{p}_i=y_i$ from before, for each $\mathbf{p}_i$ we are left with a linear system $\widetilde{W}\mathbf{p}_i=\mathbf{v}_i$, where 
    \begin{equation}\label{eq:wandv}
        \widetilde{W}=\begin{bmatrix} W_{1:\din} \\ W_{\din+1} \\ \mathbf{w}_V^T \end{bmatrix} \qquad \text{ and } \qquad \mathbf{v}_i=\begin{bmatrix} \frac{\mathbf{x}_i}{h^2} \\ -\frac{\|\mathbf{x}_i\|^2}{2h^2} + \beta \\ y_i \end{bmatrix}.
    \end{equation}

    By \cref{ass:parameters}, the matrix $\widetilde{W}$ has full row rank, thus the system has at least one solution for any target vector $\mathbf{v}_i$. We construct the minimum-norm solution by the Moore-Penrose pseudoinverse $\widetilde{W}^\dagger$:

    \begin{equation}
        \mathbf{p}_i^* = \widetilde{W}^\dagger\mathbf{v}_i \coloneqq \widetilde{W}^T (\widetilde{W} \widetilde{W}^T)^{-1} \mathbf{v}_i.
    \end{equation}
    
    Assembling the soft prompt tokens, we get a soft prompt $P^*$ such that
    \begin{equation}
        \widetilde{G}(H_{P^*,\mathbf{x}}) = \frac{\sum_{i=1}^n\exp\left(\frac{\mathbf{x}^T \mathbf{x}_i}{h^2}-\frac{\|\mathbf{x}_i\|^2}{2h^2}\right)y_i+\frac{1}{e^\beta}\exp(\xemb W\xemb^T)\xemb\mathbf{w}_V}{\sum_{i=1}^n\exp\left(\frac{\mathbf{x}^T \mathbf{x}_i}{h^2}-\frac{\|\mathbf{x}_i\|^2}{2h^2}\right)+\frac{1}{e^\beta}\exp(\xemb W\xemb^T)}.
    \end{equation}

    It remains to uniformly bound the difference $\left| \widetilde{G}(H_{P^*,\mathbf{x}}) - \widehat{\mathcal{K}}_h(\mathbf{x}) \right|$. Factoring out the denominator, the difference simplifies to
    \begin{align}\label{eq:outputkerneldifference}
        \left| \widetilde{G}(H_{P^*,\mathbf{x}}) - \widehat{\mathcal{K}}_h(\mathbf{x}) \right| &= 
        \frac{\left| \frac{1}{e^\beta}\exp(\xemb W\xemb^T)\xemb\mathbf{w}_V - \widehat{\mathcal{K}}_h(\mathbf{x}) \frac{1}{e^\beta}\exp(\xemb W\xemb^T) \right|}
        {\sum_{i=1}^n\exp\left(\frac{\mathbf{x}^T \mathbf{x}_i}{h^2}-\frac{\|\mathbf{x}_i\|^2}{2h^2}\right) + \frac{1}{e^\beta}\exp(\xemb W\xemb^T)} \nonumber 
        \\ &\leq \frac{\frac{1}{e^\beta}\left(\left| \exp(\xemb W\xemb^T)\right| |\xemb\mathbf{w}_V| + \left|\widehat{\mathcal{K}}_h(\mathbf{x})\right| \left| \exp(\xemb W\xemb^T) \right|\right)}
        {\sum_{i=1}^n\exp\left(\frac{\mathbf{x}^T \mathbf{x}_i}{h^2}-\frac{\|\mathbf{x}_i\|^2}{2h^2}\right) + \frac{1}{e^\beta}\exp(\xemb W\xemb^T)} \nonumber 
        \\ &\leq \frac{1}{e^\beta}\cdot\frac{\left| \exp(\xemb W\xemb^T)\right|\left(|\xemb\mathbf{w}_V| + \left|\widehat{\mathcal{K}}_h(\mathbf{x})\right|\right)}
        {\sum_{i=1}^n\exp\left(\frac{\mathbf{x}^T \mathbf{x}_i}{h^2}-\frac{\|\mathbf{x}_i\|^2}{2h^2}\right)}
    \end{align}
    where the first inequality comes from the triangle inequality, and the second from discarding the positive term in the denominator. 

    We establish bounds on each component. Recall, in particular, \cref{ass:domain,ass:function,ass:noise}. It follows from the norm bound on the query and support points that
    \begin{equation}
        \frac{\mathbf{x}^T \mathbf{x}_i}{h^2}-\frac{\|\mathbf{x}_i\|^2}{2h^2} \geq -\frac{R^2}{h^2}-\frac{R^2}{2h^2}=-\frac{3R^2}{2h^2},
    \end{equation}
    from which we may conclude the following bound on the denominator of the kernel estimator:
    \begin{equation}
        \sum_{i=1}^n\exp\left(\frac{\mathbf{x}^T \mathbf{x}_i}{h^2}-\frac{\|\mathbf{x}_i\|^2}{2h^2}\right) \geq n\exp\left(-\frac{3R^2}{2h^2}\right)>0.
    \end{equation}

    The homogeneous embedding of $\mathbf{x}$ specified in \eqref{eq:homogeneousembedding} satisfies $\|\xemb\| \leq \sqrt{R^2+1}$. We have that 
    \begin{equation}
        |\xemb W\xemb^T| \leq \|\xemb\|^2\|W\| \leq (R^2+1)\cdot\|W\|
    \end{equation}
    and 
    \begin{equation}
        |\xemb\mathbf{w}_V| \leq \|\xemb\|\|\mathbf{w}_V\| \leq \sqrt{R^2+1}\cdot\|\mathbf{w}_V\|.
    \end{equation}
    Exponentiating everywhere gives the corresponding bounds. 

    Finally, the kernel estimator is a convex combination of target values $y_i$, thus its absolute value is bounded above by 
    \begin{equation}
        \left|\widehat{\mathcal{K}}_h(\mathbf{x})\right| \leq \max_{1\leq i\leq n}|y_i|\leq S.
    \end{equation}

    Plugging everything back into \eqref{eq:outputkerneldifference}, we arrive at the final bound 
    \begin{equation}\label{eq:expandedapproxerror}
        \sup_{\mathbf{x}\in X}\left|\widetilde{G}(H_{P^*,\mathbf{x}}) - \widehat{\mathcal{K}}_h(\mathbf{x})\right| \leq \frac{1}{ne^\beta}\exp\left(\frac{3R^2}{2h^2}\right)\exp\left((R^2+1)\|W\|\right)\left(\sqrt{R^2+1}\|\mathbf{w}_V\| + S\right).
    \end{equation}
    Squaring both sides and hiding constants gives our desired result:
    \begin{equation}
        \sup_{\mathbf{x}\in X}\left(\widetilde{G}(H_{P^*,\mathbf{x}}) - \widehat{\mathcal{K}}_h(\mathbf{x})\right)^2 = O\left(\frac{1}{n^2e^{2\beta}}\cdot\exp\left(\frac{3R^2}{h^2}\right)\right).
    \end{equation}
    
    Asymptotically, we have quadratic decay in terms of soft prompt length $n$, exponential decay in terms of the logit shift $\beta$, and exponential growth in inverse kernel bandwidth $h$. 

    To bound the norm of the constructed soft prompt tokens, we first derive a uniform bound on the $\|\mathbf{v}_i\|$, which was defined in \eqref{eq:wandv}. By \cref{ass:domain}, $\|\mathbf{x}_i\|\leq R$, so $\|\mathbf{x}_i/h^2\|\leq R/h^2$. Similarly, taking absolute value of the second term yields
    \begin{equation}
        \left|-\frac{\|\mathbf{x}_i\|^2}{2h^2}+\beta\right| \leq \frac{R^2}{2h^2}+\beta,
    \end{equation}
    since $\beta, h$, and $R$ are nonnegative. Lastly, by definition, $|y_i|\leq S$. Together, we have
    \begin{equation}
        \|\mathbf{v}_i\| \leq \sqrt{\frac{R^2}{h^4} + \left(\beta+\frac{R^2}{2h^2}\right)^2 + S^2}
    \end{equation}
    for any $1\leq i\leq n$. 

    Since $\widetilde{W}\in\R^{(\din+2)\times\dh}$ is wide with full row rank (\cref{ass:parameters}), its smallest singular value $\sigma_{\min}(\widetilde{W})$ is guaranteed to be strictly positive. The Moore-Penrose pseudoinverse has spectral norm equal to the reciprocal of this smallest singular value:
    \begin{equation}
        \|\widetilde{W}^\dagger\| = \frac{1}{\sigma_{\min}(\widetilde{W})}.
    \end{equation}
    
    Submultiplicativity then yields our total bound: for all $1\leq i\leq n$, 
    \begin{equation}
        \|\mathbf{p}_i^*\| \leq \|\widetilde{W}^\dagger\|\|\mathbf{v}_i\| \leq \frac{1}{\sigma_{\min}(\widetilde{W})}\sqrt{\frac{R^2}{h^4} + \left(\beta+\frac{R^2}{2h^2}\right)^2 + S^2}=O(\beta+h^{-2}).
    \end{equation}
\end{proof}

\begin{remark}
    Logit shifting allows us to exponentially suppress the unwanted query self-attention term, but it is not strictly necessary. If we set the shift to $\beta=0$, the squared error still decays at a quadratic rate in prompt length, though this alone is insufficient for our forthcoming universal approximation theorems. Alternatively, this issue could be handled via prompt duplication (concatenating exact copies of the soft prompt to inflate their mass in the kernel), or bypassed entirely by modifying the architecture to use a masked/cross-attention that explicitly prevents query self-attention, allowing for exact reconstruction of the kernel estimator. 

    Our approach relies on logit shifting for two reasons: it achieves an exponential rate under a standard, unmodified self-attention regime, and it aligns with the real-world computational preference for scaling token magnitude over increasing prompt length. The exponential decay in $\beta$ will be important to balance the bandwidth sensitivity for \cref{thm:univapprox} in the next subsection, and also some random constants in the analogous theorems in \cref{sec:random}.
\end{remark}

\begin{remark}
    \cref{ass:parameters} is stronger than is necessary for the existence of a sufficient soft prompt from this procedure, which only requires $\mathbf{v}_i$ to be in the column space of $\widetilde{W}$. We use it to also track the prompt norm of the construction, and have a cleaner sample-independent condition. 
\end{remark}

\subsection{Universal Approximation}

Using standard techniques and results from approximation theory, the results of the previous subsection can be extended to derive complete universal approximation error rates for regression. Mirroring the in-context learning setup, we study regression with sampled support points (as opposed to fixed designs) and a sampled query. 

\begin{assumption}[Distribution of support points and query]\label{ass:supportpoints}
    Support points $\mathbf{x}_1,\mathbf{x}_2,\ldots,\mathbf{x}_n$ and the query $\mathbf{x}$ are drawn iid from the uniform distribution on $X$, denoted $\mathcal{U}(X)$. 
\end{assumption}

We decompose the total approximation error by introducing some more intermediate terms (beyond \cref{thm:promptingkernel}), which we define here. First, to consider the error induced by noise, define the \emph{noiseless} empirical kernel estimator analogously as \eqref{eq:kernelformula}: 
\begin{equation}
    \overline{\mathcal{K}}_h(\mathbf{x}) = \frac{\sum_{i=1}^n \exp\left(-\frac{\|\mathbf{x}-\mathbf{x}_i\|^2}{2h^2}\right)f(\mathbf{x}_i)}{\sum_{i=1}^n \exp\left(-\frac{\|\mathbf{x}-\mathbf{x}_i\|^2}{2h^2}\right)}.
\end{equation}

We may bound the difference between the noisy and noiseless kernel estimators with the following lemma.

\begin{lemma}[Noise term of kernel estimator]\label{lem:kernelnoise}
    Suppose $X$ satisfies \cref{ass:domain}, the support points and query satisfy \cref{ass:supportpoints}, and the labels satisfy \cref{ass:noise}. Then for any $\mathbf{x}\in X$ and $0<h\leq \tau_X/2$, 
    \begin{equation}
        \E_{\mathbf{x}_i,\epsilon_i} \left(\widehat{\mathcal{K}}_h(\mathbf{x})-\overline{\mathcal{K}}_h(\mathbf{x})\right)^2=O\left(\frac{S_\mathrm{noise}^2}{nh^{d_X}}\right),
    \end{equation}
    where $O(\cdot)$ hides dependence on $d_X$ and $\tau_X$. 
\end{lemma}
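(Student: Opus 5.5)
Write $K_i=\exp(-\|\mathbf{x}-\mathbf{x}_i\|^2/(2h^2))\in(0,1]$ and $w_i=K_i/\sum_j K_j$, so that $\widehat{\mathcal{K}}_h(\mathbf{x})-\overline{\mathcal{K}}_h(\mathbf{x})=\sum_i w_i\epsilon_i$. The plan is to condition on the support points, remove the noise by its independence and mean-zero property, and then control the remaining design-dependent quantity $\sum_i w_i^2$ through the small-ball volume on the manifold.

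\textbf{Averaging out the noise.} Conditionally on $\mathbf{x}_1,\ldots,\mathbf{x}_n$, the $\epsilon_i$ are independent, mean zero, and independent of the design by \cref{ass:noise}. The cross terms therefore vanish, and
\begin{equation}
\E_{\epsilon}\Big(\sum_i w_i\epsilon_i\Big)^2=\sum_i w_i^2\E\epsilon_i^2\le S_\mathrm{noise}^2\sum_i w_i^2 .
\end{equation}
Since $K_i\le 1$, we have $\sum_i w_i^2\le \max_i w_i\le 1/\sum_j K_j$. Together with the trivial bound $\sum_i w_i^2\le 1$, this gives
\begin{equation}
\sum_i w_i^2\le \min\Big(1,\frac{1}{N}\Big),\qquad N\coloneqq\sum_{j}K_j .
\end{equation}
It remains to show that $\E\min(1,1/N)=O(1/(nh^{d_X}))$.

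\textbf{Lower-bounding $N$.} Let $B=B(\mathbf{x},h)$ be the Euclidean ball of radius $h$ about $\mathbf{x}$. On $B$ we have $K_j\ge e^{-1/2}$, so $N\ge e^{-1/2}M$, where $M=\#\{j:\mathbf{x}_j\in B\}\sim\mathrm{Bin}(n,q)$ and $q=\mathcal{U}(X)(B\cap X)$. The key geometric input is a volume lower bound: for $h\le\tau_X/2$,
\begin{equation}
\mathrm{vol}(B(\mathbf{x},h)\cap X)\ge c(d_X)h^{d_X},
\end{equation}
which follows from the standard reach-based comparison of Euclidean and geodesic balls (e.g.\ Niyogi--Smale--Weinberger), since the intrinsic ball of radius comparable to $h$ is contained in $B$. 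Dividing by $\mathrm{vol}(X)$ yields $q\ge c\,h^{d_X}$. The resulting constant depends on $\mathrm{vol}(X)$, which can be absorbed into the $O(\cdot)$ as a geometric constant of $X$, alongside $d_X$ and $\tau_X$.

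\textbf{Binomial moment bound.} For $M\sim\mathrm{Bin}(n,q)$ the standard identity gives
\begin{equation}
\E\frac{1}{1+M}=\frac{1-(1-q)^{n+1}}{(n+1)q}\le\frac{1}{nq}.
\end{equation}
Since $\min(1,1/M)\le 2/(1+M)$ (with the convention that the minimum is $1$ when $M=0$), we obtain
\begin{equation}
\E\min(1,1/N)\le e^{1/2}\cdot\frac{2}{nq}=O\big(1/(nh^{d_X})\big).
\end{equation}
This bound holds for every fixed $\mathbf{x}$, so no averaging over the query is needed.

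The main obstacle is the reach-based volume lower bound with constants uniform in $\mathbf{x}$; everything else is routine. The argument relies only on $h\le\tau_X/2$ and compactness of $X$, and the handling of the event $M=0$ via $\min(1,\cdot)$ is what prevents the bound from blowing up when no support point falls near the query.
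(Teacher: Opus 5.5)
Your proposal is correct and follows the paper's proof essentially step for step. The shared steps are: conditioning to kill the cross terms, bounding by $S_\mathrm{noise}^2\min(1,1/N)$, lower-bounding $N$ by $e^{-1/2}$ times a $\mathrm{Bin}(n,q)$ count, using $\min(1,1/M)\le 2/(1+M)$ with the inverse binomial moment, and finishing with the reach-based small-ball bound $q=\Omega(h^{d_X})$. Your remark that the constant also depends on $\mathrm{vol}(X)$ is a fair sharpening of the paper's claim that it depends only on $d_X$ and $\tau_X$.
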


\begin{proof}
    Expanding out definitions, notice that the core of the estimator involving $f(\mathbf{x}_i)$ cancels out, and we are only left with the noise contribution:

    \begin{equation}
        \widehat{\mathcal{K}}_h(\mathbf{x})-\overline{\mathcal{K}}_h(\mathbf{x}) = \frac{\sum_{i=1}^n \exp\left(-\frac{\|\mathbf{x}-\mathbf{x}_i\|^2}{2h^2}\right)\epsilon_i}{\sum_{i=1}^n \exp\left(-\frac{\|\mathbf{x}-\mathbf{x}_i\|^2}{2h^2}\right)}.
    \end{equation}

    Squaring and conditioning on the support points and query, note that \cref{ass:noise} implies the cross-terms $\E[\epsilon_i\epsilon_j]$ vanish, leaving
    \begin{align}
        \E \left[\left(\frac{\sum_{i=1}^n \exp\left(-\frac{\|\mathbf{x}-\mathbf{x}_i\|^2}{2h^2}\right)\epsilon_i}{\sum_{i=1}^n \exp\left(-\frac{\|\mathbf{x}-\mathbf{x}_i\|^2}{2h^2}\right)}\right)^2 \Bigg| \{\mathbf{x}_i\}_{i=1}^n, \mathbf{x}\right] 
        &= \frac{\sum_{i=1}^n \exp\left(-\frac{\|\mathbf{x}-\mathbf{x}_i\|^2}{2h^2}\right)^2\E(\epsilon_i^2)}{\left(\sum_{i=1}^n \exp\left(-\frac{\|\mathbf{x}-\mathbf{x}_i\|^2}{2h^2}\right)\right)^2} \\ 
        &\leq S_\mathrm{noise}^2 \cdot\min\left(1, \frac{1}{\sum_{i=1}^n \exp\left(-\frac{\|\mathbf{x}-\mathbf{x}_i\|^2}{2h^2}\right)}\right),
    \end{align}
    by \cref{ass:noise} and since $0<\exp\left(-\frac{\|\mathbf{x}-\mathbf{x}_i\|^2}{2h^2}\right)\leq 1$.
    
    Each support point $\mathbf{x}_j$ such that $\|\mathbf{x}-\mathbf{x}_j\|\leq h$ will contribute at least $e^{-1/2}$ to the denominator; letting $m$ denote the number of such points, the sum can be bounded as 
    \begin{equation}
        \sum_{i=1}^n \exp\left(-\frac{\|\mathbf{x}-\mathbf{x}_i\|^2}{2h^2}\right) \geq e^{-1/2}\cdot m.
    \end{equation}
    We then have
    \begin{equation}
        \min\left(1, \frac{1}{\sum_{i=1}^n \exp\left(-\frac{\|\mathbf{x}-\mathbf{x}_i\|^2}{2h^2}\right)}\right) \leq \min\left(1, \frac{e^{1/2}}{ m}\right) \leq \frac{2e^{1/2}}{1+m}.
    \end{equation}
    By \cref{ass:supportpoints} the support points are iid uniform on $X$ (and independent of $\mathbf{x}$), thus when conditioned on the query, $m$ is binomial with $n$ trials and success probability $\mathbb{P}_{\mathbf{t}\sim\mathcal{U}(X)}(\|\mathbf{x}-\mathbf{t}\|\leq h)$. The first inverse moment identity for binomial variables yields 
    \begin{equation}
        \E\left(\frac{2e^{1/2}}{1+m}\;\big|\;\mathbf{x}\right)\leq \frac{2e^{1/2}}{(n+1)\mathbb{P}_{\mathbf{t}\sim\mathcal{U}(X)}(\|\mathbf{x}-\mathbf{t}\|\leq h)}.
    \end{equation}
    Finally, standard small-ball probability bounds (see, e.g., \citet{Niyogi08}) give that $\mathbb{P}_{\mathbf{t}\sim\mathcal{U}(X)}(\|\mathbf{x}-\mathbf{t}\|\leq h)= \Omega(h^{d_X})$
    for any $\mathbf{x}\in X$ and $0<h\leq \tau_X/2$, where the hidden constant depends only on $d_X$ and $\tau_X$. Combining the above, we may conclude the result:
    \begin{equation}
        \E_{\mathbf{x}_i,\epsilon_i} \left(\widehat{\mathcal{K}}_h(\mathbf{x})-\overline{\mathcal{K}}_h(\mathbf{x})\right)^2=O\left(\frac{S_\mathrm{noise}^2}{nh^{d_X}}\right)
    \end{equation}
    for any $\mathbf{x}\in X$. 
\end{proof}

The remaining two components in the error decomposition are the finite-sample gap and kernel regression bias, both of which involve the noiseless \emph{integral} kernel estimator:
\begin{equation}
    \mathcal{K}_h(\mathbf{x})=\frac{\int_X \exp\left(-\frac{\|\mathbf{x}-\mathbf{t}\|^2}{2h^2}\right)f(\mathbf{t})\,\mathrm{d}\mathbf{t}}{\int_X \exp\left(-\frac{\|\mathbf{x}-\mathbf{t}\|^2}{2h^2}\right)\,\mathrm{d}\mathbf{t}}.
\end{equation}

For these, we recall some results regarding noiseless kernel regression on manifolds, directly adapted from \citet{Shen26}. The proofs are deferred to their work. 

\begin{lemma}[Variance of kernel estimator]\label{lem:kernelvariance}
    Suppose $X$ and $f$ satisfy \cref{ass:domain} and \cref{ass:function} respectively, and suppose we have support points $\{\mathbf{x}_i\}_{i=1}^n$ and a query $\mathbf{x}$ satisfying \cref{ass:supportpoints}. For any $\delta\in(0,1]$, we have that
    \begin{equation}
        \left|\overline{\mathcal{K}}_h(\mathbf{x})-\mathcal{K}_h(\mathbf{x})\right| = O\left(\left(\log\left(h^{-1}\right)\right)^{3d_X/4}\cdot\sqrt{\frac{\log(4/\delta)}{nh^{d_X}}}\right)
    \end{equation}
    with probability at least $1-\delta$, where $h>0$ is the kernel bandwidth. Here, $O(\cdot)$ hides dependence on $d_X,S_f,$ and $\tau_X$, where the dependency on $d_X$ can be $d_X^{d_X/2}$ in the worst case. 
\end{lemma}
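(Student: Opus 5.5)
Fix the query $\mathbf{x}$ and condition on it throughout. Write $K_i=\exp(-\|\mathbf{x}-\mathbf{x}_i\|^2/(2h^2))\in(0,1]$ and introduce the normalized empirical sums
\begin{equation}
A_n=\frac{1}{n}\sum_{i=1}^n K_i f(\mathbf{x}_i),\qquad B_n=\frac{1}{n}\sum_{i=1}^n K_i.
\end{equation}
Their population counterparts are $a=\E[K f(\mathbf{t})]$ and $b=\E[K]$ with $\mathbf{t}\sim\mathcal{U}(X)$. Then $\overline{\mathcal{K}}_h(\mathbf{x})=A_n/B_n$ and $\mathcal{K}_h(\mathbf{x})=a/b$; the normalization by $\mathrm{vol}(X)$ cancels in the ratio. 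We use the standard ratio decomposition
\begin{equation}
\frac{A_n}{B_n}-\frac{a}{b}=\frac{(A_n-a)-\frac{a}{b}(B_n-b)}{B_n}.
\end{equation}
Since $|a/b|\le S_f$, it suffices to control the numerator deviations $|A_n-a|$ and $|B_n-b|$ and to lower bound $B_n$.

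The deviations follow from Bernstein's inequality, applied conditionally on $\mathbf{x}$ to the iid variables $K_if(\mathbf{x}_i)$ and $K_i$. These variables are bounded by $S_f$ and $1$ respectively. Their variances are at most $S_f^2\E[K^2]\le S_f^2 b$ and $b$, because $K^2\le K$. With probability at least $1-\delta/2$ we then get
\begin{equation}
|A_n-a|+|B_n-b|\lesssim (1+S_f)\left(\sqrt{\frac{b\log(4/\delta)}{n}}+\frac{\log(4/\delta)}{n}\right).
\end{equation}
For the lower bound on $B_n$ we use the same concentration: on the same event, $B_n\ge b/2$ as soon as $nb\gtrsim\log(4/\delta)$. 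Dividing the numerator bound by $B_n\ge b/2$ gives
\begin{equation}
\left|\overline{\mathcal{K}}_h(\mathbf{x})-\mathcal{K}_h(\mathbf{x})\right|\lesssim \sqrt{\frac{\log(4/\delta)}{nb}}+\frac{\log(4/\delta)}{nb}.
\end{equation}
In the complementary regime $nb\lesssim\log(4/\delta)$ the claimed bound exceeds a constant multiple of $2S_f$, which trivially dominates the left-hand side, so nothing needs to be proved there.

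It remains to lower bound $b=\E[K]$. We reuse the small-ball estimate from the proof of \cref{lem:kernelnoise}. Since $K\ge e^{-1/2}$ on $\{\|\mathbf{x}-\mathbf{t}\|\le h\}$, we have $b\ge e^{-1/2}\,\mathbb{P}(\|\mathbf{x}-\mathbf{t}\|\le h)=\Omega(h^{d_X})$ for $h\le\tau_X/2$, with constants depending on $d_X,\tau_X$. Substituting this into the previous display gives the rate $\sqrt{\log(4/\delta)/(nh^{d_X})}$. The second-order term $\log(4/\delta)/(nh^{d_X})$ is absorbed into the first whenever it is at most one, and otherwise the trivial bound applies.

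The main obstacle is reconciling this argument with the exact form of the statement. First, it must also cover bandwidths $h>\tau_X/2$. There I would use that $X$ is compact, so $b\ge \exp(-2R^2/h^2)$ is bounded below by a constant depending on $\tau_X$ and $R$; alternatively I would simply restrict to small $h$. Second, I must explain the $(\log h^{-1})^{3d_X/4}$ factor and the $d_X^{d_X/2}$ dependence. My pointwise-in-$\mathbf{x}$ argument does not need that factor. It arises in \citet{Shen26} either from truncating the Gaussian at radius $h\sqrt{\log h^{-1}}$ or from a union/covering bound. Since the extra factor only weakens the claim, the bound above implies the stated one, with the additional polylogarithmic factor serving as slack.
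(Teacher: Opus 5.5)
Your argument is correct, and it takes a different route from the paper, which gives no proof of this lemma and imports it, polylogarithmic factor included, from \citet{Shen26}. You prove a pointwise-in-query bound directly. Conditioning on $\mathbf{x}$, the ratio decomposition plus Bernstein, with the observation $K^2\le K$ making both variances $O(b)$, gives $|\overline{\mathcal{K}}_h(\mathbf{x})-\mathcal{K}_h(\mathbf{x})|\lesssim\sqrt{\log(4/\delta)/(nb)}$ whenever $nb\gtrsim\log(4/\delta)$. The trivial bound $2S_f$ covers the other regime. The small-ball estimate from \cref{lem:kernelnoise} then gives $b=\Omega(h^{d_X})$ uniformly in $\mathbf{x}$, so integrating over the query yields the joint probability statement.

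What this buys is a self-contained and strictly sharper bound with no $(\log h^{-1})^{3d_X/4}$ factor, and a pointwise bound is all \cref{thm:univapprox} uses. Substituting it there improves term $\mathrm{III}$ to $O(n^{-2\alpha/(2\alpha+d_X)}\log n)$. The overall rate then becomes $O(n^{-2\alpha/(2\alpha+d_X)}(\log n)^2)$, limited by the bias term.

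One correction to your last paragraph: the polylogarithmic factor is harmless slack only when $\log(h^{-1})$ is bounded below by a positive constant. As $h\to1^-$ the factor tends to $0$, so for fixed $n$, nonconstant $f$, and small $\delta$ the stated bound is actually false there. It is also undefined for $h\ge1$. Your compactness patch for $h>\tau_X/2$ repairs the small-ball step but not this problem. The right move is your fallback: restrict the lemma to $0<h\le\min(1/2,\tau_X/2)$. This is exactly the regime used in \cref{thm:univapprox}, and there $(\log h^{-1})^{3d_X/4}\ge(\log 2)^{3d_X/4}$ is a constant depending only on $d_X$.
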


\begin{lemma}[Bias of kernel regression on manifold]\label{lem:kernelbias}
    Suppose $X$ and $f$ satisfy \cref{ass:domain} and \cref{ass:function} respectively. Uniformly over $X$, the integral kernel estimator $\mathcal{K}_h$ for $f$ with bandwidth $h>0$ satisfies
    \begin{equation}
        \left|\mathcal{K}_h(\mathbf{x})-f(\mathbf{x})\right| = O(h^\alpha\log(h^{-1})).
    \end{equation}
    The $O(\cdot)$ hides dependence on $d_X,L,S_f,$ and $\tau_X$, with at most exponential dependence on $d_X$. 
\end{lemma}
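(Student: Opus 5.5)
Since $\mathcal{K}_h(\mathbf{x})$ is a weighted average of $f$ with nonnegative weights, I would write the error as an average of increments. Set $k_h(\mathbf{x},\mathbf{t})=\exp(-\|\mathbf{x}-\mathbf{t}\|^2/(2h^2))$. Then
\begin{equation}
    \mathcal{K}_h(\mathbf{x})-f(\mathbf{x})=\frac{\int_X k_h(\mathbf{x},\mathbf{t})\bigl(f(\mathbf{t})-f(\mathbf{x})\bigr)\,\mathrm{d}\mathbf{t}}{\int_X k_h(\mathbf{x},\mathbf{t})\,\mathrm{d}\mathbf{t}}.
\end{equation}
The plan is to split the numerator at a radius $r=r(h)$ around $\mathbf{x}$. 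This gives a near region $\|\mathbf{x}-\mathbf{t}\|\le r$, controlled by the H\"older condition, and a far region $\|\mathbf{x}-\mathbf{t}\|>r$, controlled by the Gaussian tail. I will take $r=h\sqrt{2c\log(h^{-1})}$ for a constant $c$ fixed below. It suffices to treat small $h$, say $h\le \min(\tau_X/2,e^{-1})$. For larger $h$ the trivial bound $|\mathcal{K}_h-f|\le 2S_f$ already has the form $O(h^\alpha\log(h^{-1}))$ once the constant is allowed to depend on $\tau_X$.

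\emph{Near region.} Here $|f(\mathbf{t})-f(\mathbf{x})|\le L\,d(\mathbf{x},\mathbf{t})^\alpha$. If the H\"older condition in \cref{ass:function} is in the intrinsic (geodesic) metric, I first use the positive reach $\tau_X$ to compare the metrics. For Euclidean distance at most $\tau_X/2$, the geodesic distance is at most a constant multiple of it; this is the standard Niyogi--Smale--Weinberger estimate. We need $r\le\tau_X/2$, which holds for $h$ below a threshold depending on $\tau_X$, $\alpha$ and $d_X$. The near contribution to the ratio is then a weighted average of quantities bounded by $C L r^\alpha$, so it is at most
\begin{equation}
    C L r^\alpha = O\bigl(h^\alpha(\log h^{-1})^{\alpha/2}\bigr)\le O\bigl(h^\alpha\log(h^{-1})\bigr),
\end{equation}
using $\alpha\le1$ and $\log h^{-1}\ge1$.

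\emph{Far region.} On this region $k_h\le e^{-r^2/(2h^2)}=h^{c}$ and $|f(\mathbf{t})-f(\mathbf{x})|\le 2S_f$, so the far part of the numerator is at most $2S_f\,\mathrm{vol}(X)\,h^{c}$. For the denominator I restrict to the ball $\|\mathbf{x}-\mathbf{t}\|\le h$, where $k_h\ge e^{-1/2}$. The small-ball volume bound already used in \cref{lem:kernelnoise} then gives
\begin{equation}
    \int_X k_h(\mathbf{x},\mathbf{t})\,\mathrm{d}\mathbf{t}\ge e^{-1/2}c'h^{d_X}\qquad\text{uniformly in }\mathbf{x}\in X,
\end{equation}
valid for $h\le\tau_X/2$ with $c'$ depending only on $d_X$ and $\tau_X$. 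Choosing $c=\alpha+d_X$ makes the far contribution $O(h^{\alpha})$. The total volume $\mathrm{vol}(X)$ is bounded in terms of $d_X,\tau_X$ and $R$, and is absorbed into the constant.

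Adding the two pieces gives the uniform bound $O(h^\alpha\log(h^{-1}))$. The constants depend on $L$, $S_f$, $\tau_X$ and $d_X$, and the only exponential dependence on $d_X$ enters through $c'$ and the volume. The main obstacle is the geometric input: a uniform-in-$\mathbf{x}$ lower bound on the local volume $\mathrm{vol}(X\cap B(\mathbf{x},h))\gtrsim h^{d_X}$, and the comparison between geodesic and Euclidean distance. Both rely on the reach. For them I would invoke the tangent-space projection argument of Niyogi--Smale--Weinberger, rather than reprove them.
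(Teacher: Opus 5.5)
The paper does not prove this lemma itself. It quotes it from \citet{Shen26} and defers the proof there, so there is no in-text argument to match. Your self-contained proof is the standard truncation argument, and its core is correct.
\begin{itemize}
\item The error is a kernel-weighted average of $f(\mathbf{t})-f(\mathbf{x})$.
\item On $\|\mathbf{x}-\mathbf{t}\|\le r$, the H\"older bound gives $O(r^\alpha)$. This uses that geodesic distance is at most $2\|\mathbf{x}-\mathbf{t}\|$ once $\|\mathbf{x}-\mathbf{t}\|\le\tau_X/2$.
\item On the complement, the Gaussian factor $h^{\alpha+d_X}$ beats the small-ball lower bound $\int_X k_h(\mathbf{x},\mathbf{t})\,\mathrm{d}\mathbf{t}\gtrsim h^{d_X}$.
\end{itemize}
Compared with the citation, your route shows exactly which geometric facts are needed: the same small-ball estimate the paper already uses in \cref{lem:kernelnoise}, plus the metric comparison. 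It even gives the slightly sharper $O\bigl(h^\alpha(\log h^{-1})^{\alpha/2}\bigr)$. The citation buys brevity and, per the statement, the tracking of how the constant depends on $d_X$.

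Two points of bookkeeping need fixing.

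\emph{The range of $h$.} Your sentence about larger $h$ is false as written. As $h\to1$, $h^\alpha\log(h^{-1})\to0$, and it is nonpositive for $h\ge1$. So the trivial bound $2S_f$ only covers $h_0\le h\le h_1$ for some fixed $h_1<1$. The lemma can only hold for $h$ bounded away from $1$; the paper applies it with $h<1/2$. You should state it that way.

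\emph{The constant.} Your far-region constant is $2e^{1/2}S_f\,\mathrm{vol}(X)/c'$.
\begin{itemize}
\item $\mathrm{vol}(X)$ is not controlled by $d_X$, $\tau_X$ and $R$ alone; it can grow with the ambient dimension $\din$. An example is a disjoint union of well-separated round $d_X$-spheres of radius $\tau_X$ packed into the ball of radius $R$, whose number can grow with $\din$. So your constant depends on $X$ beyond the quantities listed in the statement. The small-ball constant in \cref{lem:kernelnoise} hides the same factor, so this is a shared looseness.
\item $1/c'$ is proportional to $1/\omega_{d_X}$, where $\omega_{d_X}$ is the volume of the unit $d_X$-ball. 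This grows like $d_X^{d_X/2}$, not exponentially.
\end{itemize}
To get the advertised at-most-exponential dependence, lower-bound the denominator on the ball of radius $\sqrt{d_X}\,h$ instead, for $\sqrt{d_X}\,h\le\tau_X/2$. There $k_h\ge e^{-d_X/2}$, and the volume is at least $(\sqrt{15}/4)^{d_X}\omega_{d_X}d_X^{d_X/2}h^{d_X}$. By Stirling, $e^{-d_X/2}\omega_{d_X}d_X^{d_X/2}\approx(2\pi)^{d_X/2}/\sqrt{\pi d_X}$. Apart from $\mathrm{vol}(X)$, the constant is then at most exponential in $d_X$.
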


With these in hand, we state and prove our universal approximation theorem for prompting in the deterministic case, showing that under certain assumptions, any function can be approximated at a specific rate by prompting. Note that by our approach of going through the NW kernel estimator, we also inherit its positive qualities: we recover the minimax-optimal convergence rate for noisy nonparametric regression on manifolds (up to logarithmic factors), which is independent of the ambient dimension $\din$ (up to constants, through $R$). 

\begin{theorem}[Universal approximation for prompting]\label{thm:univapprox}
    Let $X$ satisfy \cref{ass:domain} and $G\in\mathcal{G}$ be a transformer network whose attention parameters satisfy \cref{ass:parameters}. Suppose the support points and query satisfy \cref{ass:supportpoints}, and the labels satisfy \cref{ass:noise}. For any $f$ satisfying \cref{ass:function}, there exists a map assigning to each realization of samples $\{\mathbf{x}_i,y_i=f(\mathbf{x}_i)+\epsilon_i\}_{i=1}^n$ a soft prompt $P^*$ of length $n$, independent of the query $\mathbf{x}$, such that
    \begin{equation}
        \E_{\mathbf{x}_i,\epsilon_i,\mathbf{x}} (\widetilde{G}(H_{P^*,\mathbf{x}})-f(\mathbf{x}))^2 = O\left(n^{-\frac{2\alpha}{2\alpha+d_X}}(\log n)^{1+\frac{3d_X}{2}}\right).
    \end{equation}
    Moreover, uniformly over all realizations of samples, the tokens of $P^*$ have norm bounded as
    \begin{equation}
        \|\mathbf{p}^*_i\| = O(n^{\frac{2}{2\alpha+d_X}}).
    \end{equation}
\end{theorem}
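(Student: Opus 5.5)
We construct $P^*$ exactly as in \cref{thm:promptingkernel}: given the realized samples $\{\mathbf{x}_i,y_i\}_{i=1}^n$, set $\mathbf{p}_i^*=\widetilde{W}^\dagger\mathbf{v}_i$ with $\mathbf{v}_i$ as in \eqref{eq:wandv}. This depends on the samples, $h$ and $\beta$, but not on the query $\mathbf{x}$. We choose the bandwidth $h$ and the logit shift $\beta$ as functions of $n$, and then bound the error through the four-term decomposition
\begin{equation}
\widetilde{G}(H_{P^*,\mathbf{x}})-f(\mathbf{x}) = \bigl(\widetilde{G}-\widehat{\mathcal{K}}_h\bigr)+\bigl(\widehat{\mathcal{K}}_h-\overline{\mathcal{K}}_h\bigr)+\bigl(\overline{\mathcal{K}}_h-\mathcal{K}_h\bigr)+\bigl(\mathcal{K}_h-f\bigr).
\end{equation}
By $(a+b+c+d)^2\le 4(a^2+b^2+c^2+d^2)$, it suffices to bound the expected square of each term.

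\emph{Choice of parameters.} Take $h=n^{-1/(2\alpha+d_X)}$, which lies in $(0,\tau_X/2]$ for $n$ large. For small $n$ the bound holds trivially after enlarging constants, since every quantity is bounded. Take
\begin{equation}
\beta=\frac{3R^2}{2h^2}+\log n,
\end{equation}
so that the exponential factor in \cref{thm:promptingkernel} is cancelled. Then the first term is $O(n^{-2}e^{-2\log n})=O(n^{-4})$ uniformly, which is negligible. The token norm bound follows directly from \cref{thm:promptingkernel}:
\begin{equation}
\|\mathbf{p}_i^*\|=O(\beta+h^{-2})=O(h^{-2}+\log n)=O(n^{2/(2\alpha+d_X)}).
\end{equation}
This holds uniformly over all realizations, because the constant depends only on $\sigma_{\min}(\widetilde{W})$, $R$ and $S$, and because $|y_i|\le S$ deterministically.

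\emph{The remaining terms.} By \cref{lem:kernelnoise}, for each fixed $\mathbf{x}$ the noise term contributes $O(1/(nh^{d_X}))=O(n^{-2\alpha/(2\alpha+d_X)})$. Averaging over $\mathbf{x}$ preserves this, because the query is independent of the samples. The bias term \cref{lem:kernelbias} holds uniformly and gives $O(h^{2\alpha}\log^2(h^{-1}))=O(n^{-2\alpha/(2\alpha+d_X)}\log^2 n)$.

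\emph{Main obstacle: the variance term.} \cref{lem:kernelvariance} is a high-probability statement, so it must be converted into a bound in expectation. We do this by splitting on the failure event with $\delta=1/n$. Both $\overline{\mathcal{K}}_h$ and $\mathcal{K}_h$ are convex combinations of values of $f$, so $|\overline{\mathcal{K}}_h-\mathcal{K}_h|\le 2S_f$ always. The failure event therefore contributes at most $4S_f^2/n$. On the success event the squared deviation is
\begin{equation}
O\!\left((\log h^{-1})^{3d_X/2}\,\frac{\log(4n)}{nh^{d_X}}\right)=O\!\left(n^{-\frac{2\alpha}{2\alpha+d_X}}(\log n)^{1+\frac{3d_X}{2}}\right).
\end{equation}
The lemma is stated for a random query together with the samples, so we apply it to the joint draw as written; if it were only pointwise in $\mathbf{x}$, we would apply it for each $\mathbf{x}$ and then integrate.

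\emph{Conclusion.} Summing the four contributions, the variance term dominates, since $\log^2 n\le(\log n)^{1+3d_X/2}$ for $d_X\ge1$ and large $n$. This gives the claimed rate $O\bigl(n^{-2\alpha/(2\alpha+d_X)}(\log n)^{1+3d_X/2}\bigr)$.
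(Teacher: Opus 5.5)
Your proposal is correct and follows essentially the same route as the paper: the same pseudoinverse construction from \cref{thm:promptingkernel}, the same four-term decomposition, the same bandwidth $h=n^{-1/(2\alpha+d_X)}$, and the same conversion of \cref{lem:kernelvariance} into an expectation bound by splitting on the failure event. The differences are cosmetic. Your extra $\log n$ in $\beta$ makes term I $O(n^{-4})$ rather than the paper's $O(n^{-2})$, at no cost to the token norm. You take $\delta=1/n$ instead of the paper's $\delta=4h^2$, and the resulting failure contribution $O(n^{-1})$ is still dominated by $n^{-2\alpha/(2\alpha+d_X)}$ since $d_X\ge 1$.
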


\begin{proof}
    Fix a query $\mathbf{x}\in X$ and a set of support points $\{\mathbf{x}_i\}_{i=1}^n$. Inserting intermediate terms leads to the following decomposition:
    \begin{align*}
        &\left(\widetilde{G}(H_{P^*,\mathbf{x}})-f(\mathbf{x})\right)^2 \\
        &\quad\leq 4\underbrace{\left(\widetilde{G}(H_{P^*,\mathbf{x}})-\widehat{\mathcal{K}}_h(\mathbf{x})\right)^2}_{\mathrm{I}}
        + 4\underbrace{\left(\widehat{\mathcal{K}}_h(\mathbf{x})-\overline{\mathcal{K}}_h(\mathbf{x})\right)^2}_{\mathrm{II}} + 4\underbrace{\left(\overline{\mathcal{K}}_h(\mathbf{x})-\mathcal{K}_h(\mathbf{x})\right)^2}_{\mathrm{III}}
        + 4\underbrace{\left(\mathcal{K}_h(\mathbf{x})-f(\mathbf{x})\right)^2}_{\mathrm{IV}}.
    \end{align*}

    We bound the four terms separately. The first term $\mathrm{I}$ is essentially \cref{thm:promptingkernel}, which was proved by the explicit construction of an appropriate soft prompt. Terms $\mathrm{II}$, $\mathrm{III}$, and $\mathrm{IV}$ are bounded by \cref{lem:kernelnoise,,lem:kernelvariance,,lem:kernelbias} respectively.

    To recover minimax-optimal rates, we choose the kernel bandwidth and logit shift as
    \begin{equation}
        h=n^{-\frac{1}{2\alpha+d_X}} \qquad \text{ and } \qquad \beta=\frac{3R^2}{2}\cdot n^{\frac{2}{2\alpha+d_X}}
    \end{equation}
    respectively. Since we are proving an asymptotic (in $n$) bound, we may assume $n>(2/\tau_X)^{2\alpha+d_X}$, so that $0<h<\min(1/2,\tau_X/2)$. 
    
    Recall from (the proof of) \cref{thm:promptingkernel}, we have
    \begin{equation}
        \sup_{\mathbf{x}\in X}\left(\widetilde{G}(H_{P^*,\mathbf{x}}) - \widehat{\mathcal{K}}_h(\mathbf{x})\right)^2 = O\left(\frac{1}{n^2e^{2\beta}}\cdot\exp\left(\frac{3R^2}{h^2}\right)\right).
    \end{equation}
    This result holds independently of the specific choice of support points, and the supremum over the query implies the same bound holds in expectation. Plugging in our choice of $h$ and $\beta$, the exponents cancel exactly, leaving
    \begin{equation}\label{eq:Ibound}
        \E_{\mathbf{x}_i,\mathbf{x}\sim\mathcal{U}(X)}\left(\widetilde{G}(H_{P^*,\mathbf{x}}) - \widehat{\mathcal{K}}_h(\mathbf{x})\right)^2 = O(n^{-2}).
    \end{equation}
    
    By \cref{lem:kernelnoise}, term $\mathrm{II}$ is directly bounded as
    \begin{equation}
        \E_{\mathbf{x}_i,\epsilon_i} \left(\widehat{\mathcal{K}}_h(\mathbf{x})-\overline{\mathcal{K}}_h(\mathbf{x})\right)^2=O\left(\frac{S_\mathrm{noise}^2}{nh^{d_X}}\right).
    \end{equation}
    Substituting the choice for $h$ and hiding $S_\mathrm{noise}$, 
    \begin{equation}\label{eq:IIbound}
        \E_{\mathbf{x}_i,\epsilon_i} \left(\widehat{\mathcal{K}}_h(\mathbf{x})-\overline{\mathcal{K}}_h(\mathbf{x})\right)^2 = O\left(n^{-2\alpha/(2\alpha+d_X)}\right).
    \end{equation}
    
    For term $\mathrm{III}$, by \cref{lem:kernelvariance},
    \begin{equation}
        \left|\overline{\mathcal{K}}_h(\mathbf{x})-\mathcal{K}_h(\mathbf{x})\right| = O\left(\left(\log\left(h^{-1}\right)\right)^{3d_X/4}\cdot\sqrt{\frac{\log(4/\delta)}{nh^{d_X}}}\right)
    \end{equation}
    with probability $1-\delta$, for any $\delta>0$. By definition of the kernel, we have the uniform bound $2S$ (recall $S=S_f+S_\mathrm{noise}$), which we may use on the failure probability $\delta$ to get an expectation bound:
    \begin{equation}
        \E_{\mathbf{x}_i,\mathbf{x}\sim\mathcal{U}(X)}\left(\overline{\mathcal{K}}_h(\mathbf{x})-\mathcal{K}_h(\mathbf{x})\right)^2 \leq (1-\delta)\cdot O\left(\left(\log\left(h^{-1}\right)\right)^{3d_X/2}\cdot\frac{\log(4/\delta)}{nh^{d_X}}\right) + \delta(2S)^2.
    \end{equation}
    
    Recall that $\alpha\leq 1$. We explicitly set $\delta=4h^2$, whereupon the bound simplifies to
    \begin{align}
        \E_{\mathbf{x}_i,\mathbf{x}}\left(\overline{\mathcal{K}}_h(\mathbf{x})-\mathcal{K}_h(\mathbf{x})\right)^2 &= O\left(\frac{(\log(h^{-1}))^{1+\frac{3d_X}{2}}}{nh^{d_X}}\right) \notag \\ 
        &= O\left(n^{-\frac{2\alpha}{2\alpha+d_X}}(\log n)^{1+\frac{3d_X}{2}}\right) \label{eq:IIIbound}
    \end{align}
    The first equality uses that $0<h<1/2$; the second line follows from actually substituting $h=n^{-\frac{1}{2\alpha+d_X}}$. 

    Finally, \cref{lem:kernelbias} directly gives us a bound on term $\mathrm{IV}$, which inherently does not involve support points:
    \begin{equation}
        \E_{\mathbf{x}\sim \mathcal{U}(X)}\left(\mathcal{K}_h(\mathbf{x})-f(\mathbf{x})\right)^2 = O(h^{2\alpha}(\log(h^{-1}))^2).
    \end{equation}
    
    Substituting in our choice of $h$ yields
    \begin{equation}\label{eq:IVbound}
        \E_{\mathbf{x}\sim \mathcal{U}(X)}\left(\mathcal{K}_h(\mathbf{x})-f(\mathbf{x})\right)^2 = O\left(n^{-\frac{2\alpha}{2\alpha+d_X}}(\log n)^2\right).
    \end{equation}

    Assembling \eqref{eq:Ibound},\eqref{eq:IIbound},\eqref{eq:IIIbound}, and\eqref{eq:IVbound} together, we arrive at the total bound:
    \begin{align*}
        &\E_{\mathbf{x}_i,\epsilon_i,\mathbf{x}} (\widetilde{G}(H_{P^*,\mathbf{x}})-f(\mathbf{x}))^2 \\\qquad &= O(n^{-2}) + O\left(n^{-\frac{2\alpha}{2\alpha+d_X}}\right)+O\left(n^{-\frac{2\alpha}{2\alpha+d_X}}(\log n)^{1+\frac{3d_X}{2}}\right) + O\left(n^{-\frac{2\alpha}{2\alpha+d_X}}(\log n)^2\right).
    \end{align*}
    
    The first term decays the fastest. For the other three terms, since $d_X\geq 1$, the exponents compare as $1+3d_X/2>2>0$ and the middle term slightly dominates:
    \begin{equation}
        \E_{\mathbf{x}_i,\epsilon_i,\mathbf{x}} (\widetilde{G}(H_{P^*,\mathbf{x}})-f(\mathbf{x}))^2 = O\left(n^{-\frac{2\alpha}{2\alpha+d_X}}(\log n)^{1+\frac{3d_X}{2}}\right).
    \end{equation}

    The norm bound for the soft prompt tokens follows directly by substituting our choice of $h$ and $\beta$ into the bound from \cref{thm:promptingkernel} and hiding constants. 
\end{proof}

\section{Random Transformers}\label{sec:random}
In this section, we show that random transformers are universal approximators when appropriately prompted. We first consider random attention parameters, then extend to fully random transformers with random embedding and decoding. 

While our previous deterministic results were for any hidden dimension satisfying $\dh\geq\din+2$, in the random setting we consider the large-width limit where $\dh\gg\din+2$, which is common in practice \citep{Brown20}. 

We recall here a useful result in random matrix theory regarding the singular values of random matrices with Gaussian entries; for standard references, see \citet{Davidson01} or \citet{Vershynin26}.

\begin{lemma}[Nonasymptotic tail bounds on singular values of random Gaussian matrix]\label{lem:gaussiansingularvalues}
Let $A\in \R^{M\times N}$ with $M\geq N$ be a random matrix with iid $\mathcal{N}(0,1)$ entries. Then for every $t\geq 0$,
\begin{equation}
    \mathbb{P}\left(\sigma_{\min}(A) \geq \sqrt{M}-\sqrt{N}-t\right) \geq 1-e^{-t^2/2}
\end{equation}
and
\begin{equation}
    \mathbb{P}\left(\sigma_{\max}(A) \leq \sqrt{M}+\sqrt{N}+t\right) \geq 1-e^{-t^2/2}.
\end{equation}
\end{lemma}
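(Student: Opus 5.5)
The statement to prove is \cref{lem:gaussiansingularvalues}. I would prove it in two stages. First I bound the expectations, $\E\,\sigma_{\min}(A)\geq\sqrt{M}-\sqrt{N}$ and $\E\,\sigma_{\max}(A)\leq\sqrt{M}+\sqrt{N}$. Then I upgrade these to tail bounds with Gaussian concentration of Lipschitz functions.

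\emph{Variational form.} Write the extreme singular values as min--max expressions over unit spheres:
\begin{equation}
\sigma_{\max}(A)=\max_{\mathbf{u}\in S^{N-1},\,\mathbf{v}\in S^{M-1}}\mathbf{v}^TA\mathbf{u},\qquad \sigma_{\min}(A)=\min_{\mathbf{u}\in S^{N-1}}\max_{\mathbf{v}\in S^{M-1}}\mathbf{v}^TA\mathbf{u}.
\end{equation}
Here $\sigma_{\min}(A)$ is the $N$-th singular value, which is the smallest one since $M\geq N$.

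\emph{Comparison processes.} Consider the Gaussian process $X_{\mathbf{u},\mathbf{v}}=\mathbf{v}^TA\mathbf{u}$ and compare it with $Y_{\mathbf{u},\mathbf{v}}=\mathbf{g}^T\mathbf{u}+\mathbf{h}^T\mathbf{v}$, where $\mathbf{g}\sim\mathcal{N}(0,I_N)$ and $\mathbf{h}\sim\mathcal{N}(0,I_M)$ are independent. For unit vectors, a direct computation gives
\begin{equation}
\E(X_{\mathbf{u},\mathbf{v}}-X_{\mathbf{u}',\mathbf{v}'})^2=2-2(\mathbf{u}^T\mathbf{u}')(\mathbf{v}^T\mathbf{v}')\leq 4-2\mathbf{u}^T\mathbf{u}'-2\mathbf{v}^T\mathbf{v}'=\E(Y_{\mathbf{u},\mathbf{v}}-Y_{\mathbf{u}',\mathbf{v}'})^2.
\end{equation}
The inequality is equivalent to $(1-a)(1-b)\geq0$ with $a=\mathbf{u}^T\mathbf{u}'$, $b=\mathbf{v}^T\mathbf{v}'\in[-1,1]$, so it holds. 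Equality holds when $\mathbf{u}=\mathbf{u}'$.

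\emph{Expectation bounds.} For the maximum, Sudakov--Fernique gives
\begin{equation}
\E\sigma_{\max}(A)\leq\E\|\mathbf{g}\|+\E\|\mathbf{h}\|\leq\sqrt{N}+\sqrt{M},
\end{equation}
using Jensen's inequality for the last step. For the minimum, apply Gordon's min--max inequality. It uses the same increment comparison together with equality when $\mathbf{u}=\mathbf{u}'$, and yields
\begin{equation}
\E\min_{\mathbf{u}}\max_{\mathbf{v}}X\geq\E\min_{\mathbf{u}}\max_{\mathbf{v}}Y=\E\|\mathbf{h}\|-\E\|\mathbf{g}\|.
\end{equation}
To get $\E\|\mathbf{h}\|-\E\|\mathbf{g}\|\geq\sqrt{M}-\sqrt{N}$, I use that $k\mapsto\sqrt{k}-\E\|\mathbf{g}_k\|$ is nonincreasing in $k$, or equivalently that $\E\|\mathbf{g}_k\|$ has increments $\E\|\mathbf{g}_{k+1}\|-\E\|\mathbf{g}_k\|\geq\sqrt{k+1}-\sqrt{k}$. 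This follows from the explicit Gamma-ratio formula $\E\|\mathbf{g}_k\|=\sqrt{2}\,\Gamma((k+1)/2)/\Gamma(k/2)$ together with a log-convexity estimate for the Gamma function. This is the standard Davidson--Szarek argument.

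\emph{Concentration.} By Weyl's inequality, both $A\mapsto\sigma_{\min}(A)$ and $A\mapsto\sigma_{\max}(A)$ are $1$-Lipschitz with respect to the Frobenius norm, since $|\sigma_k(A)-\sigma_k(B)|\leq\|A-B\|\leq\|A-B\|_F$. The entries of $A$ form a standard Gaussian vector in $\R^{MN}$, so Gaussian concentration for $1$-Lipschitz functions gives one-sided tails $\mathbb{P}(F\leq\E F-t)\leq e^{-t^2/2}$ and $\mathbb{P}(F\geq\E F+t)\leq e^{-t^2/2}$. Combining these with the expectation bounds gives both claimed inequalities.

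The main obstacle is the lower bound on $\E\sigma_{\min}$. Gordon's inequality requires checking the asymmetric comparison conditions correctly: the increment inequality for all pairs, plus equality of increments whenever the index of the outer minimum coincides. Over continuous index sets it also requires a finite-net or compactness limiting argument. The Gamma-ratio monotonicity is a secondary technical point. Since this is a classical result, I would ultimately cite \citet{Davidson01} or \citet{Vershynin26} for these two steps.
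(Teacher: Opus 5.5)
Your proposal is correct. It is exactly the classical argument behind the references the paper cites in place of a proof (\citet{Davidson01}, \citet{Vershynin26}): Sudakov--Fernique for $\E\sigma_{\max}(A)$, Gordon's min--max comparison for $\E\sigma_{\min}(A)$ (your increment check, with equality when $\mathbf{u}=\mathbf{u}'$, is the right one), and Gaussian concentration for the $1$-Lipschitz maps $A\mapsto\sigma_{\min}(A),\sigma_{\max}(A)$.

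One caution on the Gamma-ratio step. The one-line log-convexity bound $\E\|\mathbf{g}_k\|\leq\sqrt{k}$, combined with $\E\|\mathbf{g}_{k-1}\|\,\E\|\mathbf{g}_k\|=k-1$, only gives $\E\|\mathbf{h}\|-\E\|\mathbf{g}\|\geq\sqrt{M-1}-\sqrt{N}$. The exact constant $\sqrt{M}-\sqrt{N}$ therefore depends on the sharper monotonicity of $k\mapsto\sqrt{k}-\E\|\mathbf{g}_k\|$, which you are right to cite rather than derive.
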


We will usually apply this result to wide matrices, noting that a matrix and its transpose have the same singular values. Some of our applications will involve iid $\mathcal{N}(0,\sigma^2)$ entries; in those cases both bounds scale by $\sigma$.

\subsection{Random Attention Parameters}\label{sec:randomattn}
Instead of directly assuming particular parameters which satisfy \cref{ass:parameters}, we consider attention parameters with Gaussian entries, and show that these satisfy the condition almost surely. This lets us extend the universal approximation theorem for prompting to these random transformers. 

\begin{assumption}[Random attention parameters]\label{ass:randomparams}
    The entries of the attention parameters $W_K$, $W_Q$, and $W_V$ are drawn iid from Gaussian distributions $\mathcal{N}(0,\sigma_{W_K}^2)$, $\mathcal{N}(0,\sigma_{W_Q}^2)$, and $\mathcal{N}(0,\sigma_{W_V}^2)$ respectively.
\end{assumption}

\begin{remark}\label{rem:parameterinit}
    Outside of a few baselines, our numerical experiments in \cref{sec:numerical} will initialize the entries of all random parameters (including those in \cref{ass:randomembdec}) according to $\mathcal{N}(0,1)$, and our theoretical derivations reflect this choice. Since our networks are not trained, we do not use width-dependent initialization schemes which are necessary for gradient-based learning, such as Xavier \citep{Glorot10} or Kaiming \citep{He15}. Note that such initialization variances, typically on the order of $\Theta(1/\dh)$, will change the asymptotic prompt norm bounds throughout \cref{sec:random}. For completeness, we faithfully track the variance dependencies throughout the intermediate derivations, up until the final asymptotic bounds. 
\end{remark}

For the analogous universal approximation theorem, we need only to prove a random version of \cref{thm:promptingkernel}, since the remaining approximation theory results on kernel regression are independent of transformers. Note that in the random setting, the quantities $\|W\|$ and $\|\mathbf{w}_V\|$, which were treated as constants in \cref{thm:promptingkernel}, are in fact random variables. We adjust our theorem accordingly to make this explicit. 

\begin{theorem}[Prompting NW kernel estimator from random attention transformer]\label{thm:promptingkernelrandomattn}
    Let $X$ satisfy \cref{ass:domain}, $f:X\to\R$ satisfy \cref{ass:function}, and labels satisfy \cref{ass:noise}. Let $G\in\mathcal{G}$ be a transformer network whose attention parameters satisfy \cref{ass:randomparams}. With probability $1$ over the initialization of attention parameters, for any prompt length $n\geq1$ and support points $\{\mathbf{x}_i\in X\}_{i=1}^n$, bandwidth $h>0$, and logit shift $\beta\geq0$, there exists a soft prompt $P^*$ of length $n$ such that 
    \begin{equation}
        \sup_{\mathbf{x}\in X}\left( \widetilde{G}(H_{P^*,\mathbf{x}}) - \widehat{\mathcal{K}}_h(\mathbf{x}) \right)^2 \leq \frac{C_G^2}{n^2e^{2\beta}}\cdot\exp\left(\frac{3R^2}{h^2}\right),
    \end{equation}
    where $C_G = \exp\left((R^2+1)\|W\|\right)\left(\sqrt{R^2+1}\|\mathbf{w}_V\| + S\right)$ is an almost surely finite random constant depending on the realized parameters. Moreover, for any $\delta>0$, the norm of the constructed soft prompt tokens $\|\mathbf{p}^*_i\|$ is bounded as
    \begin{equation}
        \|\mathbf{p}^*_i\| = O\left(\frac{\beta+h^{-2}}{\sqrt{\dh}}\right)
    \end{equation}
    with probability at least $1-\delta$ over the initialization of the attention parameters and uniformly over the realizations of the samples, where the hidden constant can depend on $\delta$. 
\end{theorem}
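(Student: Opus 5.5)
The plan is to reduce everything to \cref{thm:promptingkernel}. We first show that \cref{ass:parameters} holds almost surely under \cref{ass:randomparams}. The approximation bound then follows verbatim from \eqref{eq:expandedapproxerror} with $C_G$ made explicit. The remaining work is a high-probability lower bound on $\sigma_{\min}(\widetilde{W})$ of order $\sqrt{\dh}$, which gives the improved prompt-norm bound.

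\emph{Step 1 (almost sure full rank).} Write $W_{1:\din+1}=\frac{1}{\sqrt{\dh}}(W_Q)_{1:\din+1}W_K^T$, where $(W_Q)_{1:\din+1}$ denotes the first $\din+1$ rows of $W_Q$. The map sending the Gaussian entries of $(W_Q,W_K,W_V)$ to $\det(\widetilde{W}\widetilde{W}^T)$ is a polynomial. Hence it either vanishes identically or vanishes only on a Lebesgue-null set. To rule out the first case, exhibit one choice of parameters with full row rank. Take $W_K=I$, let the first $\din+1$ rows of $W_Q$ be $\sqrt{\dh}$ times the first $\din+1$ standard basis rows, and let $\mathbf{w}_V=\mathbf{e}_{\din+2}$; this is possible since $\dh\geq\din+2$. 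Since the Gaussian law is absolutely continuous, \cref{ass:parameters} holds with probability $1$. On this event we apply \cref{thm:promptingkernel} and its explicit bound \eqref{eq:expandedapproxerror}, which yields the stated inequality with $C_G$ as defined. $C_G$ is almost surely finite because $\|W\|$ and $\|\mathbf{w}_V\|$ are finite.

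\emph{Step 2 (singular value lower bound).} By the proof of \cref{thm:promptingkernel}, $\|\mathbf{p}_i^*\|\leq\sigma_{\min}(\widetilde{W})^{-1}\|\mathbf{v}_i\|$, and $\|\mathbf{v}_i\|=O(\beta+h^{-2})$ uniformly over samples. It therefore suffices to show $\sigma_{\min}(\widetilde{W})\geq c_\delta\sqrt{\dh}$ with probability $1-\delta$.

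The key observation is that $W_K$ is a square Gaussian matrix independent of $Q_0:=(W_Q)_{1:\din+1}$ and $\mathbf{w}_V$. Moreover, $\mathbf{w}_V$ is independent of both. Write
\begin{equation}
\widetilde{W}=\begin{bmatrix}\frac{1}{\sqrt{\dh}}Q_0W_K^T\\ \mathbf{w}_V^T\end{bmatrix}.
\end{equation}
Conditioned on $Q_0$, the rows of $Q_0W_K^T$ are jointly Gaussian: $Q_0W_K^T$ has the law of $(Q_0Q_0^T)^{1/2}Z$, where $Z\in\R^{(\din+1)\times\dh}$ has iid $\mathcal{N}(0,\sigma_{W_K}^2)$ entries. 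Applying \cref{lem:gaussiansingularvalues} to the $(\din+1)\times\dh$ matrix $Q_0$ gives $\sigma_{\min}(Q_0)\geq\sigma_{W_Q}(\sqrt{\dh}-\sqrt{\din+1}-t)$ with high probability. Then
\begin{equation}
\widetilde{W}=\operatorname{diag}\left(\tfrac{1}{\sqrt{\dh}}(Q_0Q_0^T)^{1/2},1\right)\begin{bmatrix}Z\\ \mathbf{w}_V^T\end{bmatrix},
\end{equation}
where the right factor is a $(\din+2)\times\dh$ matrix with independent Gaussian entries. After rescaling its rows by $\sigma_{W_K}$ and $\sigma_{W_V}$, \cref{lem:gaussiansingularvalues} shows its smallest singular value is at least a constant times $(\sqrt{\dh}-\sqrt{\din+2}-t)$. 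The diagonal factor has smallest singular value at least $\min\left(\sigma_{W_Q}(1-\sqrt{(\din+1)/\dh}-t/\sqrt{\dh}),1\right)$, which is order $1$ in the regime $\dh\gg\din+2$. Multiplying the two bounds gives $\sigma_{\min}(\widetilde{W})\gtrsim\sqrt{\dh}$.

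To finish, choose $t=\sqrt{2\log(2/\delta)}$ in each application and take a union bound. The result is $\|\mathbf{p}_i^*\|=O((\beta+h^{-2})/\sqrt{\dh})$ with probability at least $1-\delta$, uniformly over samples, since $\widetilde{W}$ does not depend on them.

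The main obstacle is Step 2. The first block of $\widetilde{W}$ is a product of Gaussians and is correlated across rows, so \cref{lem:gaussiansingularvalues} cannot be applied to $\widetilde{W}$ directly. The conditioning/factorization trick above is the intended fix. One must check that the diagonal factor does not degrade the bound; if $\sigma_{W_Q}$ is small this only changes constants. Care is also needed for finite $\dh$, where $\sqrt{\dh}-\sqrt{\din+2}-t$ could be nonpositive. This is why the claim is asymptotic, with a $\delta$-dependent constant, and why the regime $\dh\gg\din+2$ is assumed.
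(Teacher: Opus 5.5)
Your proposal is correct and matches the paper's proof: on the almost-sure full-rank event you apply the explicit bound of \cref{thm:promptingkernel}, and you lower-bound $\sigma_{\min}(\widetilde W)$ by conditioning on the first $\din+1$ rows of $W_Q$. You then write $\widetilde W$ as a block-diagonal factor times an iid Gaussian matrix and apply \cref{lem:gaussiansingularvalues} twice with a union bound. The one minor difference is the almost-sure rank step: you use a single polynomial $\det(\widetilde W\widetilde W^T)$ with an explicit witness, whereas the paper argues sequentially ($W_K$ invertible, $W_{Q,1:\din+1}$ of full row rank, $\mathbf{w}_V$ outside the resulting row space); both are valid, and yours is slightly more economical.
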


\begin{proof}
    We prove existence first, then derive the corresponding norm bound. 

    \paragraph{Existence.} By \cref{thm:promptingkernel}, it suffices to show that \cref{ass:randomparams} implies $\widetilde{W}$ satisfies the full row rank condition of \cref{ass:parameters} with probability $1$. 

    Recall $\widetilde{W}\in\R^{(\din+2)\times\dh}$ is defined by vertically concatenating the first $\din+1$ rows of the query-key matrix $W=W_QW_K^T/\sqrt{\dh}$ with the last column of the value matrix $W_V$ transposed:
    \begin{equation}
        \widetilde{W}=\begin{bmatrix} W_{1:\din} \\ W_{\din+1} \\ \mathbf{w}_V^T \end{bmatrix}.
    \end{equation}

    We first show that the key matrix $W_K$ is invertible almost surely. This is a standard result for Gaussian random matrices (which $W_K$ is, by \cref{ass:randomparams}), but we include a proof for completeness. The determinant of $W_K$ is a polynomial in $\dh^2$ variables, and is clearly not identically zero. Its vanishing locus forms a lower-dimensional algebraic variety, thus has Lebesgue measure zero in $\R^{\dh^2}$. Integrating the joint PDF over the measure zero set yields $\mathbb{P}(\det(W_K)=0)=0$, and $W_K$ is invertible almost surely. 

    Next, consider the wide matrix $W_{Q,{1:\din+1}}\in\R^{(\din+1)\times\dh}$, the first $\din+1$ rows of $W_Q$. By \cref{ass:randomparams}, following the same argument as above, any $((\din+1)\times(\din+1))$-submatrix has nonzero determinant almost surely. Hence $\operatorname{rank}(W_{Q,{1:\din+1}})=\din+1$ almost surely.
    
    The transpose of an invertible matrix is invertible, and right-multiplication by an invertible matrix (or a nonzero scalar) preserves row rank. Therefore, we may conclude that the first $\din+1$ rows of $W$ have rank $\din+1$ almost surely. Let $\mathcal{V}$ denote that row space. 
 
    Finally, $W_V$ is initialized independently of $W_Q$ and $W_K$, thus its last column $\mathbf{w}_V$ is a random vector independent of $\mathcal{V}$. Noting the dimensions, $\mathcal{V}$ must be a proper subspace of $\R^{\dh}$, thus has Lebesgue measure zero in $\R^{\dh}$. Integrating the respective PDF over a set of measure zero, we find $\mathbb{P}(\mathbf{w}_V\in\mathcal{V}\mid\mathcal{V})=0$, thus concatenating $\mathbf{w}_V$ will increase rank almost surely. 

    Put together, we conclude that $\operatorname{rank}(\widetilde{W})=\din+2$ (the condition of \cref{ass:parameters}) with probability 1, after which we may directly invoke the construction with the Moore-Penrose pseudoinverse from \cref{thm:promptingkernel}, which yields the desired soft prompt and corresponding bounds:
    \begin{equation}
        \sup_{\mathbf{x}\in X}\left|\widetilde{G}(H_{P^*,\mathbf{x}}) - \widehat{\mathcal{K}}_h(\mathbf{x})\right| \leq \frac{1}{ne^\beta}\exp\left(\frac{3R^2}{2h^2}\right)\exp\left((R^2+1)\|W\|\right)\left(\sqrt{R^2+1}\|\mathbf{w}_V\| + S\right).
    \end{equation}

    A small caveat, as noted above: in the random case, the last two terms which involve $\|W\|$ and $\|\mathbf{w}_V\|$ are no longer constants, but random variables (which grow like $\sqrt{\dh}$, by \cref{lem:gaussiansingularvalues}). Accordingly, we define a new explicit random constant
    \begin{equation}\label{eq:randomconstant}
        C_G \coloneqq \exp\left((R^2+1)\|W\|\right)\left(\sqrt{R^2+1}\|\mathbf{w}_V\| + S\right), 
    \end{equation}
    which is almost surely finite. With this, we rewrite our error as 
    \begin{equation}
        \sup_{\mathbf{x}\in X}\left|\widetilde{G}(H_{P^*,\mathbf{x}}) - \widehat{\mathcal{K}}_h(\mathbf{x})\right| \leq \frac{C_G}{ne^\beta}\exp\left(\frac{3R^2}{2h^2}\right).
    \end{equation}
    Squaring both sides gives the squared error bound in the theorem. 

    \paragraph{Norm bound.}
    Recall from the proof of \cref{thm:promptingkernel}, we have that 
    \begin{equation}
        \|\mathbf{p}_i^*\| \leq \|\widetilde{W}^\dagger\|\|\mathbf{v}_i\| \leq \frac{1}{\sigma_{\min}(\widetilde{W})}\sqrt{\frac{R^2}{h^4} + \left(\beta+\frac{R^2}{2h^2}\right)^2 + S^2}.
    \end{equation}
    The only component that changes in the random attention case is the smallest singular value $\sigma_{\min}(\widetilde{W})$, which is now a random variable we must bound in probability. Fix $\delta>0$. 

    We first consider $W_{1:\din+1}$, the first $\din+1$ rows of $\widetilde{W}$. By definition of $W$, it can be written as
    \begin{equation}
        W_{1:\din+1} = \frac{1}{\sqrt{\dh}}W_{Q,1:\din+1}W_K^T.
    \end{equation}

    Note that $W_{Q,1:\din+1}\in\R^{(\din+1)\times\dh}$ is a wide Gaussian matrix. Invoking \cref{lem:gaussiansingularvalues}, for any $t>0$, 
    \begin{equation}
        \mathbb{P}(\sigma_{\min}(W_{Q,1:\din+1}) \geq \sigma_{W_Q}(\sqrt{\dh}-\sqrt{\din+1}-t)) \geq 1-e^{-t^2/2}.
    \end{equation}
    Picking $t=\sqrt{2\log(2/\delta)}$ and with $\dh\gg\din$ (at least $\dh\geq8(\din+1+2\log(2/\delta)))$, we have
    \begin{equation}
        \sigma_{\min}(W_{Q,1:\din+1}) = \Omega\left(\sigma_{W_Q}\sqrt{\dh}\right)
    \end{equation}
    with probability at least $1-\delta/2$. 

    Conditioned on $W_{Q,1:\din+1}$, the product $W_{Q,1:\din+1}W_K^T/\sqrt{\dh}$ is a random matrix with independent Gaussian columns. The final row of $\widetilde{W}$ is $\mathbf{w}_V^T$, which is completely independent of $W_Q$ and $W_K$. Together, $\widetilde{W}$ has the block-diagonal covariance structure
    \begin{equation}
        \Sigma \coloneqq \begin{bmatrix}
            \frac{\sigma_{W_K}^2}{\dh}W_{Q,1:\din+1}W_{Q,1:\din+1}^T & 0 \\ 0 & \sigma_{W_V}^2 
        \end{bmatrix}.
    \end{equation}

    Conditioned on $W_Q$, we may write (reparameterization trick)
    \begin{equation}
        \widetilde{W} \stackrel{d}{=} \Sigma^{1/2}Z,
    \end{equation}
    where $Z\in\R^{(\din+2)\times\dh}$ is a random matrix with iid standard normal entries. 

    Since $\Sigma^{1/2}$ is square and invertible and $Z$ is wide with full row rank almost surely, supermultiplicativity of the minimum singular value implies
    \begin{equation}
        \sigma_{\min}(\widetilde{W}) \geq \sigma_{\min}(\Sigma^{1/2})\sigma_{\min}(Z).
    \end{equation}

    For block-diagonal matrices, their minimum singular value is simply the minimum of the singular values for each respective block:
    \begin{equation}
        \sigma_{\min}(\Sigma^{1/2}) = \min\left(\frac{\sigma_{W_K}}{\sqrt{\dh}}\sigma_{\min}(W_{Q,1:\din+1}),\sigma_{W_V}\right).
    \end{equation}

    On the aforementioned high probability event, this reduces to
    \begin{equation}
        \sigma_{\min}(\Sigma^{1/2}) \geq \min(c_1\sigma_{W_Q}\sigma_{W_K},\sigma_{W_V})
    \end{equation}
    
    Note $Z\in\R^{(\din+2)\times\dh}$ is a wide matrix since $\dh\gg\din+2$. As above, by \cref{lem:gaussiansingularvalues} we deduce that
    \begin{equation}
        \sigma_{\min}(Z) \geq \Omega\left(\sqrt{\dh}\right)
    \end{equation}
    with probability at least $1-\delta/2$.

    Union bounding with the first high probability event, we have that with probability at least $1-\delta$,
    \begin{equation}
        \sigma_{\min}(\widetilde W)\geq \sigma_{\min}(\Sigma^{1/2})\,\sigma_{\min}(Z) = \Omega\left(\min(\sigma_{W_Q}\sigma_{W_K},\sigma_{W_V})\sqrt{\dh}\right).
    \end{equation}

    Putting everything together, we conclude
    \begin{equation}
    \|\mathbf{p}_i^*\| \;\leq\; \frac{\|\mathbf v_i\|}{\sigma_{\min}(\widetilde W)}
    \;\leq\; \frac{\sqrt{\dfrac{R^2}{h^4} + \left(\beta+\dfrac{R^2}{2h^2}\right)^2 + S^2}}{\sigma_{\min}(\widetilde W)}
    \;=\; O\!\left(\frac{\beta+h^{-2}}{\sqrt{\dh}}\right)
    \end{equation}
    with probability at least $1-\delta$.
\end{proof}

As a corollary, we get a universal approximation theorem in the random setting analogous to \cref{thm:univapprox}.

\begin{theorem}[Universal approximation for prompting random attention transformer]\label{thm:univapproxrandomattn}
    Let $G\in\mathcal{G}$ be a transformer network whose attention parameters satisfy \cref{ass:randomparams}. Suppose $X$ satisfies \cref{ass:domain} and fix an $f$ satisfying \cref{ass:function}. Suppose the support points and query satisfy \cref{ass:supportpoints}, and labels satisfy \cref{ass:noise}. For almost every initialization of the attention parameters, there exists a map, depending on the realized parameters, assigning to each realization of samples $\{\mathbf{x}_i,y_i=f(\mathbf{x}_i)+\epsilon_i\}_{i=1}^n$ a soft prompt $P^*$ of length $n$, independent of the query $\mathbf{x}$, such that 
    \begin{equation}
        \E_{\mathbf{x}_i,\epsilon_i,\mathbf{x}} (\widetilde{G}(H_{P^*,\mathbf{x}})-f(\mathbf{x}))^2 = O\left(n^{-\frac{2\alpha}{2\alpha+d_X}}(\log n)^{1+\frac{3d_X}{2}}\right),
    \end{equation} 
    conditional on the realized parameters. Furthermore, for any $\delta>0$, with probability at least $1-\delta$ over the attention parameters, the soft prompt tokens satisfy 
    \begin{equation}
        \|\mathbf{p}_i^*\| = O\left(\frac{n^{\frac{2}{2\alpha+d_X}}}{\sqrt{\dh}}+1\right)
    \end{equation}
    for all $1\leq i\leq n$, uniformly over the realizations of the samples.
\end{theorem}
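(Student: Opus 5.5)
The plan is to rerun the proof of \cref{thm:univapprox} essentially verbatim, using \cref{thm:promptingkernelrandomattn} in place of \cref{thm:promptingkernel}. First we fix a realization of the attention parameters in the probability-one event on which $\widetilde{W}$ has full row rank. Existence of that event is the first part of \cref{thm:promptingkernelrandomattn}. On this event the weights are deterministic. The random constant $C_G$ is finite, and it does not depend on the samples, the noise, or the query. The prompt map is the one from \cref{thm:promptingkernel}: each sample realization $\{\mathbf{x}_i,y_i\}$ is sent to $\mathbf{p}_i^*=\widetilde{W}^\dagger\mathbf{v}_i$. This map depends on the realized parameters and does not depend on $\mathbf{x}$.

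Next we decompose the error into the four terms $\mathrm{I}$--$\mathrm{IV}$, exactly as before. We choose $h=n^{-1/(2\alpha+d_X)}$ and $\beta=\tfrac{3R^2}{2}n^{2/(2\alpha+d_X)}$. Then the bound in \cref{thm:promptingkernelrandomattn} gives $\mathrm{I}\le C_G^2 n^{-2}$ uniformly in the query and samples, and conditionally on the parameters this is $O(n^{-2})$ with $C_G$ absorbed into the constant. Terms $\mathrm{II}$, $\mathrm{III}$ and $\mathrm{IV}$ do not involve the transformer at all. We bound them by \cref{lem:kernelnoise,lem:kernelvariance,lem:kernelbias}, with the same choice $\delta=4h^2$ for term $\mathrm{III}$. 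Adding the four bounds gives the stated rate conditionally on the parameters.

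For the norm bound, fix $\delta>0$. By the second part of \cref{thm:promptingkernelrandomattn}, with probability at least $1-\delta$ we have $\sigma_{\min}(\widetilde W)=\Omega(\sqrt{\dh})$ (taking $\dh$ large enough relative to $\din$ and $\log(1/\delta)$). On that event, $\|\mathbf{p}_i^*\|\le \sigma_{\min}(\widetilde W)^{-1}\|\mathbf{v}_i\|$ for every sample realization. The earlier estimate $\|\mathbf{v}_i\|\le\sqrt{R^2h^{-4}+(\beta+R^2/(2h^2))^2+S^2}$ keeps the constant $S$ separate. Substituting our $h$ and $\beta$ gives $\|\mathbf{v}_i\|=O(n^{2/(2\alpha+d_X)}+1)$. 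Dividing by $\sqrt{\dh}$ yields $O(n^{2/(2\alpha+d_X)}/\sqrt{\dh}+1)$, and the additive $1$ covers the $S/\sqrt{\dh}$ part.

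The only delicate point is the quantifiers on the hidden constants. The rate constant in term $\mathrm{I}$ contains $C_G^2=\exp(2(R^2+1)\|W\|)(\cdots)^2$, which is random and can grow quickly with $\dh$. The statement says the rate holds conditionally on the realized parameters, so this is acceptable: for almost every initialization the constant is finite. I would state this explicitly and not try to give a uniform-in-parameters bound. If a uniform bound were wanted, the fix would be to enlarge $\beta$ by an additive $(R^2+1)\|W\|$ so that $C_G$ cancels. That would change the norm bound only by a term of order $\|W\|/\sqrt{\dh}$, which is $O(1)$ with high probability by \cref{lem:gaussiansingularvalues}, so the stated norm bound would still hold.
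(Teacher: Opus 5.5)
Your argument is correct, but it takes a different route from the paper's proof at the one point where this theorem departs from \cref{thm:univapprox}: the choice of logit shift.

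You keep the non-adaptive shift $\beta=\frac{3R^2}{2}n^{2/(2\alpha+d_X)}$ and let the realized constant carry through in term $\mathrm{I}\le C_G^2n^{-2}$. This is legitimate, since the rate is only claimed conditionally on the parameters. The paper instead sets $\beta=\frac{3R^2}{2}n^{2/(2\alpha+d_X)}+\log_+C_G$. Then term $\mathrm{I}\le\min(C_G^2,1)\,n^{-2}\le n^{-2}$ for every realization, and the constant in the rate no longer depends on the weights. The price is an extra $\log_+C_G/\sqrt{\dh}=O(1)$ (with high probability) in the token norm, which is exactly where the ``$+1$'' in the stated norm bound comes from.

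Your route gives the sharper norm bound $O\bigl((n^{2/(2\alpha+d_X)}+1)/\sqrt{\dh}\bigr)$, but it has a cost. With unit-variance weights, $C_G$ grows exponentially in $\sqrt{\dh}$ with high probability, as the paper notes. So your bound on term $\mathrm{I}$ is only dominated by the kernel-regression terms once $n$ is at least of order $C_G$, i.e.\ exponentially large in $\sqrt{\dh}$. The paper's adaptive shift removes this at $O(1)$ cost.

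One small correction to your closing remark. Adding $(R^2+1)\|W\|$ to $\beta$ cancels only the exponential factor of $C_G$. It leaves $(\sqrt{R^2+1}\|\mathbf{w}_V\|+S)^2$, which is still random and of order $\sigma_{W_V}^2\dh$. To get a parameter-free bound, add $\log_+C_G$ as the paper does. Its additional piece $\log_+(\sqrt{R^2+1}\|\mathbf{w}_V\|+S)=O(\log\dh)$ still contributes only $O(1)$ after dividing by $\sqrt{\dh}$.
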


\begin{proof}
    We have the same four term decomposition from \cref{thm:univapprox}. Terms $\mathrm{II}$, $\mathrm{III}$, and $\mathrm{IV}$ are inherent to kernel regression, holding independently of the transformer. We use \cref{thm:promptingkernelrandomattn} to bound term $\mathrm{I}$ in the random case. Recall \eqref{eq:expandedapproxerror} from the proof of \cref{thm:promptingkernel}:
    \begin{equation}
        \sup_{\mathbf{x}\in X}\left|\widetilde{G}(H_{P^*,\mathbf{x}}) - \widehat{\mathcal{K}}_h(\mathbf{x})\right| \leq \frac{1}{ne^\beta}\exp\left(\frac{3R^2}{2h^2}\right)\exp\left((R^2+1)\|W\|\right)\left(\sqrt{R^2+1}\|\mathbf{w}_V\| + S\right).
    \end{equation}

    In the proof of \cref{thm:univapprox}, we chose $h=n^{-\frac{1}{2\alpha+d_X}}$ and correspondingly set $\beta=\frac{3R^2}{2}\cdot n^{\frac{2}{2\alpha+d_X}}$; the remaining constant was absorbed into the $O(\cdot)$. In the random attention case this constant is a random variable, which we defined as $C_G$ in \eqref{eq:randomconstant}, and will in fact depend exponentially on $\sqrt{\dh}$. To deal with this, we may redefine $\beta$ to exactly absorb it:
    \begin{equation}
        \beta = \frac{3R^2}{2}\cdot n^{\frac{2}{2\alpha+d_X}} + \log_+ C_G.
    \end{equation}
    We restrict to $\log_+ x \coloneqq \max(\log x,0)$, the positive part of the logarithm, to avoid edge cases which cause negative $\beta$. Substituting in this new $\beta$ gives the desired $\leq n^{-2}$ bound for term $\mathrm{I}$ almost surely. The remaining steps follow the proof of \cref{thm:univapprox}, mutatis mutandis.

    Fix $\delta>0$. The token norm bound follows by inserting $h$ and $\beta$ into the token bound from \cref{thm:promptingkernelrandomattn}, to which we allocate failure probability $\delta/2$. By \cref{lem:gaussiansingularvalues}, with high probability we have that 
    \begin{equation}
        \|W\|\leq\frac{1}{\sqrt{\dh}}\|W_Q\|\|W_K\|=O(\sigma_{W_Q}\sigma_{W_K}\sqrt{\dh})
    \end{equation} 
    and $\|\mathbf{w}_V\|=O(\sigma_{W_V}\sqrt{\dh})$, whence we find
    \begin{equation}
        \log_+\left(\sqrt{R^2+1}\|\mathbf{w}_V\| + S\right)=O(\log(\sigma_{W_V}\sqrt{\dh})).
    \end{equation}
    We distribute the other $\delta/2$ failure probability over these events. Consequently, the change in $\beta$ only results in an added $O(1)$ term to the norm bound (hiding the variance dependence):
    \begin{equation}
        \frac{\log_+ C_G}{\sqrt{\dh}} \leq \frac{(R^2+1)\|W\| +\log_+\left(\sqrt{R^2+1}\|\mathbf{w}_V\| + S\right)}{\sqrt{\dh}} = O(1)
    \end{equation}
    with probability at least $1-\delta/2$. The union bound gives the claimed token norm bound with probability at least $1-\delta$.
\end{proof}

\begin{remark}
    Unlike before in \cref{thm:univapprox}, the logit shift is now adaptive to the realized network (in particular, the realization of the random attention parameters), with only an additional $O(1)$ cost in token norm.
\end{remark}

\subsection{Random Embedding and Decoding}\label{sec:randomembdec}

For fully random transformers, we expand our transformer class. The fixed embedding and decoding defined in \cref{sec:architecture} were used primarily for clarity and concreteness. Here, we redefine them as affine maps, which will also be initialized randomly alongside the attention parameters. 

The embedding is now given by
\begin{equation}\label{eq:affemb}
    E_\mathrm{aff}(\mathbf{x}) = \xemb \coloneqq \mathbf{x}^TU_E+\mathbf{b}_E^T,
\end{equation}
where $U_E\in\R^{\din\times\dh}$ and $\mathbf{b}_E\in\R^{\dh}$. Similarly, letting $H_{n+1}$ denote the last row of a matrix $H$, the decoding is redefined as
\begin{equation}\label{eq:affdec}
    D_{\mathrm{aff}}(H)=H_{n+1}\mathbf{u}_D+b_D,
\end{equation}
where $\mathbf{u}_D\in\R^{\dh}$ and $b_D\in\R$. The bias is necessary in the embedding, but optional for decoding, as we will see in the proof of \cref{thm:promptingkernelrandomtransformer}. Abusing notation, we define $\widetilde{G}=D_{\mathrm{aff}}\circ A$ like before. 

We still consider a transformer model with a single attention layer. Our new transformer class is given by
\begin{equation}
    \mathcal{G}_\mathrm{aff} = \left\{G=D_{\mathrm{aff}}\circ A\circ E_{\mathrm{aff}} \mid W_Q,W_K,W_V\in\R^{\dh\times\dh},  U_E\in\R^{\din\times\dh}, \mathbf{b}_E,\mathbf{u}_D\in\R^{\dh}, b_D\in\R\right\}.
\end{equation}

We refer to the totality of the weights as the \emph{transformer weights}, to distinguish them from just the attention parameters as before. We have a similar randomness assumption for the new embedding and decoding parameters.

\begin{assumption}[Random embedding and decoding parameters]\label{ass:randomembdec}
    The embedding parameters $U_E$ and $\mathbf{b}_E$ have entries distributed iid according to Gaussian distributions $\mathcal{N}(0,\sigma_{U_E}^2)$ and $\mathcal{N}(0,\sigma_{\mathbf{b}_E}^2)$ respectively. The decoding parameter $\mathbf{u}_D$ has entries distributed iid according to $\mathcal{N}(0,\sigma_{\mathbf{u}_D}^2)$ and $b_D\sim\mathcal{N}(0,\sigma_{b_D}^2)$.
\end{assumption}

Even in the fully random setting, prompting can still reconstruct the NW kernel estimator, as the following theorem shows. 

\begin{theorem}[Prompting NW kernel estimator for fully random transformer]\label{thm:promptingkernelrandomtransformer}
    Let $X$ satisfy \cref{ass:domain} and $f:X\to\R$ satisfy \cref{ass:function}. Assume labels satisfy \cref{ass:noise}. Fix $G\in \mathcal{G}_\mathrm{aff}$ whose attention parameters satisfy \cref{ass:randomparams} and embedding and decoding parameters satisfy \cref{ass:randomembdec}. With probability $1$ over the initialization of the transformer weights, for any prompt length $n\geq1$, set of support points $\{\mathbf{x}_i\in X\}_{i=1}^n$, bandwidth $h>0$, and logit shift $\beta\geq0$, there exists a soft prompt $P^*$ of length $n$ such that 
    \begin{equation}
        \sup_{\mathbf{x}\in X}\left( \widetilde{G}(H_{P^*,\mathbf{x}}) - \widehat{\mathcal{K}}_h(\mathbf{x}) \right)^2 \leq \frac{C_{G,\mathrm{aff}}^2}{n^2e^{2\beta}}\cdot\exp\left(\frac{3R^2}{h^2}\right),
    \end{equation}
    where 
    \begin{equation}
        C_{G,\mathrm{aff}} \coloneqq \exp\left(\left(R\|U_E\|+\|\mathbf{b}_E\|\right)^2\|W\|\right)\left(\left(R\|U_E\|+\|\mathbf{b}_E\|\right)\|W_V\|\|\mathbf{u}_D\|+|b_D|+S\right)
    \end{equation}
    is an almost surely finite random constant depending on the realized transformer weights. Moreover, for any $\delta>0$, the constructed soft prompt has tokens $\|\mathbf{p}^*_i\|$ satisfying
    \begin{equation}
        \|\mathbf{p}^*_i\| = O\left(\frac{\beta+h^{-2}}{\dh}\right),
    \end{equation}
    with probability at least $1-\delta$, uniformly over the realizations of the samples. 
\end{theorem}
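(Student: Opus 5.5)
The plan is to reduce to the construction of \cref{thm:promptingkernel}. I would redo the logit and value matching with the affine maps $E_\mathrm{aff}$ and $D_\mathrm{aff}$, find the new linear system, show its coefficient matrix has full row rank almost surely, and then bound its smallest singular value in probability.

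\textbf{Matching and construction.} With $\xemb=\mathbf{x}^TU_E+\mathbf{b}_E^T$, the logit of prompt token $i$ is $\mathbf{x}^T(U_EW)\mathbf{p}_i+(\mathbf{b}_E^TW)\mathbf{p}_i$. Since the softmax weights sum to one, the decoder bias factors out: $\widetilde G(H_{P,\mathbf{x}})=b_D+\sum_j a_j\,(\text{row}_j\, W_V\mathbf{u}_D)$. I would therefore match the value term $\mathbf{p}_i^TW_V\mathbf{u}_D=y_i-b_D$. This gives the system $\widetilde W_\mathrm{aff}\mathbf{p}_i=\mathbf{v}_i'$ with
\begin{equation}
\widetilde W_\mathrm{aff}=\begin{bmatrix} U_EW\\ \mathbf{b}_E^TW\\ (W_V\mathbf{u}_D)^T\end{bmatrix},\qquad \mathbf{v}_i'=\begin{bmatrix}\mathbf{x}_i/h^2\\ -\|\mathbf{x}_i\|^2/(2h^2)+\beta\\ y_i-b_D\end{bmatrix}.
\end{equation}
I take $\mathbf{p}_i^*=\widetilde W_\mathrm{aff}^\dagger\mathbf{v}_i'$. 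The error estimate then repeats \eqref{eq:outputkerneldifference} verbatim, using three bounds:
\begin{itemize}
\item $\|\xemb\|\le R\|U_E\|+\|\mathbf{b}_E\|$;
\item $|\xemb W\xemb^T|\le\|\xemb\|^2\|W\|$;
\item the self-attention value deviation is bounded by $|\xemb W_V\mathbf{u}_D|+|b_D|+|\widehat{\mathcal K}_h|$, with $|\widehat{\mathcal K}_h|\le S$.
\end{itemize}
This yields exactly the constant $C_{G,\mathrm{aff}}$.

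\textbf{Almost sure full rank.} The matrix $A:=[U_E;\mathbf{b}_E^T]\in\R^{(\din+1)\times\dh}$ is Gaussian, so it has full row rank almost surely by the determinant/measure-zero argument. The matrix $W=W_QW_K^T/\sqrt{\dh}$ is invertible almost surely. Hence the first $\din+1$ rows of $\widetilde W_\mathrm{aff}$ span a $(\din+1)$-dimensional subspace $\mathcal V$. Conditioned on $\mathbf{u}_D\ne0$, the vector $W_V\mathbf{u}_D\sim\mathcal N(0,\sigma_{W_V}^2\|\mathbf{u}_D\|^2I)$ is independent of $\mathcal V$, so it lies outside $\mathcal V$ almost surely. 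This gives \cref{ass:parameters} for $\widetilde W_\mathrm{aff}$.

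\textbf{Norm bound (main obstacle).} The obstacle is that a naive bound $\sigma_{\min}(A)\sigma_{\min}(W)$ fails, because a square Gaussian product has $\sigma_{\min}(W)$ of order $\dh^{-1/2}$. I would instead peel off one Gaussian factor at a time by conditioning, as in the proof of \cref{thm:promptingkernelrandomattn}. The steps are as follows.
\begin{enumerate}
\item By \cref{lem:gaussiansingularvalues} applied to the rows with their different variances, $\sigma_{\min}(A)=\Omega(\sqrt{\dh})$ with high probability.
\item Conditioned on $A$, the columns of $AW_Q$ are iid $\mathcal N(0,\sigma_{W_Q}^2AA^T)$. So $AW_Q\stackrel{d}{=}\sigma_{W_Q}(AA^T)^{1/2}Z_1$, and $B:=AW_Q/\sqrt{\dh}$ has $\sigma_{\min}(B)=\Omega(\sqrt{\dh})$ with high probability.
\item Conditioned on $(B,\mathbf{u}_D)$, the matrix $\widetilde W_\mathrm{aff}=[BW_K^T;\ \mathbf{u}_D^TW_V^T]$ has independent Gaussian columns with block-diagonal covariance $\Sigma=\mathrm{diag}(\sigma_{W_K}^2BB^T,\ \sigma_{W_V}^2\|\mathbf{u}_D\|^2)$. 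Hence $\widetilde W_\mathrm{aff}\stackrel{d}{=}\Sigma^{1/2}Z$.
\item We have $\sigma_{\min}(\Sigma^{1/2})\ge\min(\sigma_{W_K}\sigma_{\min}(B),\sigma_{W_V}\|\mathbf{u}_D\|)=\Omega(\sqrt{\dh})$, using $\|\mathbf{u}_D\|=\Omega(\sqrt{\dh})$ with high probability by Gaussian norm concentration.
\item Finally $\sigma_{\min}(Z)=\Omega(\sqrt{\dh})$ with high probability.
\end{enumerate}
Union bounding these four events, each allotted failure probability $\delta/4$ and each requiring $\dh\gg\din+\log(1/\delta)$, gives $\sigma_{\min}(\widetilde W_\mathrm{aff})=\Omega(\dh)$. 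The bound $\|\mathbf{v}_i'\|=O(\beta+h^{-2})$ still holds, with $|b_D|$ absorbed into the constant (it is $O(1)$ with high probability). Therefore $\|\mathbf{p}_i^*\|=O((\beta+h^{-2})/\dh)$, uniformly over the samples.
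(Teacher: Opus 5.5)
Your proposal is correct and follows the paper's proof essentially step for step: the same system $\widetilde W_\mathrm{aff}\mathbf{p}_i=\mathbf{v}_i'$ with value target $y_i-b_D$, the same almost-sure full-rank argument, the same error constant $C_{G,\mathrm{aff}}$, and the same conditioning argument that peels off Gaussian factors to get $\widetilde W_\mathrm{aff}\stackrel{d}{=}\Sigma^{1/2}Z$ and hence $\sigma_{\min}(\widetilde W_\mathrm{aff})=\Omega(\dh)$. The only slip is bookkeeping: the high-probability bound on $|b_D|$ is a fifth event, so you should split $\delta$ five ways, as the paper does, rather than four.
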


\begin{proof}
    We closely follow the proof structure of \cref{thm:promptingkernel}, adapting it to this random setting. Recall that the attention logit for each prompt token $\mathbf{p}_i$ looks like $\xemb W\mathbf{p}_i$, where $W=\frac{1}{\sqrt{\dh}}W_QW_K^T$. Substituting our new affine embedding defined in \eqref{eq:affemb}, the logit expands as
    \begin{equation}
        \xemb W\mathbf{p}_i = (\mathbf{x}^TU_E+\mathbf{b}_E^T)W\mathbf{p}_i = \mathbf{x}^T(U_EW)\mathbf{p}_i+\mathbf{b}_E^TW\mathbf{p}_i.
    \end{equation}

    Further recall that to mimic the NW kernel estimator \eqref{eq:kernelformula}, we want this logit to match $\frac{\mathbf{x}^T\mathbf{x}_i}{h^2}-\frac{\|\mathbf{x}_i\|^2}{2h^2}+\beta$, which yields the following two conditions:
    \begin{equation}
        U_EW\mathbf{p}_i = \frac{\mathbf{x}_i}{h^2} \qquad \text{ and } \qquad \mathbf{b}_E^TW\mathbf{p}_i = -\frac{\|\mathbf{x}_i\|^2}{2h^2}+\beta.
    \end{equation}

    We also wanted to match the value part, which is affected by the new decoding. The unnormalized contribution of a prompt token $\mathbf{p}_i$ to the output is $\mathbf{p}_i^TW_V\mathbf{u}_D+b_D$, which suggests the following constraint:
    \begin{equation}
        (W_V\mathbf{u}_D)^T\mathbf{p}_i=y_i-b_D.
    \end{equation}

    We consolidate the conditions into a single system
    $\widetilde{W}_\mathrm{aff}\mathbf{p}_i=\mathbf{v}'_i$, where 
    \begin{equation}\label{eq:wandvaff}
        \widetilde{W}_\mathrm{aff} = \begin{bmatrix} U_EW \\ \mathbf{b}_E^TW \\ (W_V\mathbf{u}_D)^T \end{bmatrix}\in\R^{(\din+2)\times\dh} \qquad \text{ and } \qquad \mathbf{v}'_i = \begin{bmatrix} \frac{\mathbf{x}_i}{h^2} \\ -\frac{\|\mathbf{x}_i\|^2}{2h^2} + \beta \\ y_i-b_D \end{bmatrix}\in\R^{\din+2}.
    \end{equation}

    Like in the proof of \cref{thm:promptingkernelrandomattn}, it suffices to show $\widetilde{W}_\mathrm{aff}$ has full row rank almost surely. Since $W_Q$ and $W_K$ are initialized with independent Gaussian entries, $W$ has rank $\dh$ almost surely. Similarly, the distributions of $U_E$ and $\mathbf{b}_E$ imply 
    \begin{equation}
        \begin{bmatrix}
            U_E \\ \mathbf{b}_E^T 
        \end{bmatrix}\in\R^{(\din+1)\times\dh}
    \end{equation}
    has full row rank $\din+1$ almost surely, which is preserved under right multiplication by the full-rank $W$. 
    
    Lastly, consider $W_V\mathbf{u}_D$, a nondegenerate (since $\mathbf{u}_D\neq 0$ almost surely) random vector independent of the previous rows of $\widetilde{W}_\mathrm{aff}$. The previous rows span a proper subspace of $\R^{\dh}$ (Lebesgue measure $0$), thus the probability that $W_V\mathbf{u}_D$ is in the subspace is $0$, meaning its inclusion strictly increases the rank. 

    We conclude the system has at least one solution; we construct $P^*$ using the Moore-Penrose pseudoinverse. It remains to uniformly bound the difference $\left| \widetilde{G}(H_{P^*,\mathbf{x}}) - \widehat{\mathcal{K}}_h(\mathbf{x}) \right|$. As we did in \eqref{eq:outputkerneldifference}, we expand our attention output with the constructed prompt and split it into individual terms:
    \begin{equation}\label{eq:outputkernelsdifferenceaffine}
        \left| \widetilde{G}(H_{P^*,\mathbf{x}}) - \widehat{\mathcal{K}}_h(\mathbf{x}) \right| \leq \frac{1}{e^\beta}\cdot\frac{\left| \exp(\xemb W\xemb^T)\right|\left(|\xemb W_V\mathbf{u}_D+b_D| + \left|\widehat{\mathcal{K}}_h(\mathbf{x})\right|\right)}
        {\sum_{i=1}^n\exp\left(\frac{\mathbf{x}^T \mathbf{x}_i}{h^2}-\frac{\|\mathbf{x}_i\|^2}{2h^2}\right)}.
    \end{equation}

    We retain the bounds on the deterministic parts, i.e. 
    \begin{equation}
        \sum_{i=1}^n\exp\left(\frac{\mathbf{x}^T \mathbf{x}_i}{h^2}-\frac{\|\mathbf{x}_i\|^2}{2h^2}\right) \geq n\exp\left(-\frac{3R^2}{2h^2}\right) \quad \text{ and }\quad \left|\widehat{\mathcal{K}}_h(\mathbf{x})\right|\leq S.
    \end{equation}

    The norm of the embedded query is bounded as 
    \begin{equation}
        \|\xemb\|=\|\mathbf{x}^TU_E+\mathbf{b}_E^T\|\leq R\|U_E\|+\|\mathbf{b}_E\|,
    \end{equation}
    from which it follows that
    \begin{equation}
        |\xemb W\xemb^T| \leq \|\xemb\|^2\|W\|\leq (R\|U_E\|+\|\mathbf{b}_E\|)^2\|W\|.
    \end{equation}

    The decoding part is bounded similarly:
    \begin{equation}
        |\xemb W_V\mathbf{u}_D+b_D| \leq \|\xemb\|\|W_V\|\|\mathbf{u}_D\|+|b_D| \leq (R\|U_E\|+\|\mathbf{b}_E\|)\|W_V\|\|\mathbf{u}_D\|+|b_D|.
    \end{equation}

    As before, we define a new variable to summarize the random constants, which is almost surely finite:
    \begin{equation}\label{eq:randomconstantaff}
        C_{G,\mathrm{aff}} \coloneqq \exp\left(\left(R\|U_E\|+\|\mathbf{b}_E\|\right)^2\|W\|\right)\left(\left(R\|U_E\|+\|\mathbf{b}_E\|\right)\|W_V\|\|\mathbf{u}_D\|+|b_D|+S\right).
    \end{equation}
    Though we do not use it here, note that by \cref{lem:gaussiansingularvalues} we may derive 
    \begin{equation}\label{eq:randomconstantaffbound}
        C_{G,\mathrm{aff}} = \exp\left(O(\dh^{1.5})\right)
    \end{equation} 
    with high probability. As before, we will absorb $C_{G,\mathrm{aff}}$ into the logit shift later in the corresponding universal approximation theorem.

    Plugging everything into \eqref{eq:outputkernelsdifferenceaffine} and squaring both sides, the final bound simplifies compactly as 
    \begin{equation}
        \sup_{\mathbf{x}\in X}\left( \widetilde{G}(H_{P^*,\mathbf{x}}) - \widehat{\mathcal{K}}_h(\mathbf{x}) \right)^2 \leq \frac{C_{G,\mathrm{aff}}^2}{n^2e^{2\beta}}\cdot\exp\left(\frac{3R^2}{h^2}\right).
    \end{equation}

    \paragraph{Norm bound.} 
    As in the proof of \cref{thm:promptingkernel}, the minimum-norm solution satisfies
    \begin{equation}
        \|\mathbf{p}_i^*\| \leq \|\widetilde{W}_\mathrm{aff}^\dagger\|\|\mathbf{v}'_i\| 
        = \frac{\|\mathbf{v}'_i\|}{\sigma_{\min}(\widetilde{W}_\mathrm{aff})}.
    \end{equation}
    
    Compared to \eqref{eq:wandv}, the only coordinate in $\mathbf{v}'_i$ that differs is the last, which is now $y_i-b_D$ (versus just $y_i$). A standard Gaussian tail bound yields $|b_D|\leq\sigma_{b_D}\sqrt{2\log(10/\delta)}$ with probability at least $1-\delta/5$. This does not change the asymptotic bound: on this event,
    \begin{equation}
        \|\mathbf{v}'_i\| \leq \sqrt{\frac{R^2}{h^4} + \left(\beta+\frac{R^2}{2h^2}\right)^2 
        + \left(S+\sigma_{b_D}\sqrt{2\log(10/\delta)}\right)^2} = O(\beta+h^{-2})
    \end{equation}
    for all $1\leq i\leq n$, where the hidden constant may depend on $\delta$. 

    For the smallest singular value, we first write
    \begin{equation}
        M \coloneqq \begin{bmatrix} U_E \\ \mathbf{b}_E^T \end{bmatrix}
        = \Lambda Z_1 \in\R^{(\din+1)\times\dh}
    \end{equation}
    where $\Lambda \coloneqq \operatorname{diag}(\sigma_{U_E},\ldots,\sigma_{U_E},\sigma_{\mathbf{b}_E})$ and $Z_1\in\R^{(\din+1)\times\dh}$ has iid standard normal entries. The first $\din+1$ rows of $\widetilde{W}_\mathrm{aff}$ can then be written as 
    \begin{equation}
        MW=\frac{1}{\sqrt{\dh}}MW_QW_K^T,
    \end{equation}
    and the $j$-th column of $MW$ can be written 
    \begin{equation}
        \frac{1}{\sqrt{\dh}}MW_QW_{K,j}^T,
    \end{equation}
    where $W_{K,j}$ denotes the $j$-th row of $W_K$. The $W_{K,j}$ are iid random vectors with covariance $\sigma_{W_K}^2 I_{\dh}$. Similarly, the $j$-th entry of the last row of $\widetilde{W}_\mathrm{aff}$ is $W_{V,j}\mathbf{u}_D$ where $W_{V,j}$ is the $j$-th row of $W_V$. These entries are iid from $\mathcal{N}(0,\sigma_{W_V}^2\|\mathbf{u}_D\|^2)$, independent of $W_K$. Together, conditioned on $M$, $W_Q$, and $\mathbf{u}_D$, the columns of $\widetilde{W}_\mathrm{aff}$ are iid Gaussian with block-diagonal covariance
    \begin{equation}
        \Sigma = \begin{bmatrix}
            \frac{\sigma_{W_K}^2}{\dh}(MW_Q)(MW_Q)^T & 0 \\ 
            0 & \sigma_{W_V}^2\|\mathbf{u}_D\|^2 
        \end{bmatrix}.
    \end{equation}
    Again, we write $\widetilde{W}_\mathrm{aff} \stackrel{d}{=} \Sigma^{1/2}Z$ for $Z\in\R^{(\din+2)\times\dh}$ with iid standard normal entries. By supermultiplicativity of the minimum singular value in this case and the block-diagonal structure,
    \begin{equation}\label{eq:sigmamindecomp}
        \sigma_{\min}(\widetilde{W}_\mathrm{aff}) 
        \geq \min\left(\frac{\sigma_{W_K}}{\sqrt{\dh}}\sigma_{\min}(MW_Q),\; 
        \sigma_{W_V}\|\mathbf{u}_D\|\right)\cdot\sigma_{\min}(Z).
    \end{equation}

    It remains to lower bound each random quantity. Conditioning on $M$, the columns of $MW_Q$ are iid $\mathcal{N}(0,\sigma_{W_Q}^2MM^T)$, so $MW_Q\stackrel{d}{=}\sigma_{W_Q}(MM^T)^{1/2}Z_2$ with standard Gaussian $Z_2\in\R^{(\din+1)\times\dh}$, giving
    \begin{equation}
        \sigma_{\min}(MW_Q) \geq \sigma_{W_Q}\,\sigma_{\min}(M)\,\sigma_{\min}(Z_2)
        \geq \sigma_{W_Q}\min(\sigma_{U_E},\sigma_{\mathbf{b}_E})\,\sigma_{\min}(Z_1)\,\sigma_{\min}(Z_2).
    \end{equation}

    Each of $Z_1, Z_2$, and $Z$ is a wide standard Gaussian matrix; we apply \cref{lem:gaussiansingularvalues} individually. This yields 
    \begin{equation}
        \sigma_{\min}(Z_1)=\Omega\left(\sqrt{\dh}\right),\quad \sigma_{\min}(Z_2)=\Omega\left(\sqrt{\dh}\right),\quad \sigma_{\min}(Z)=\Omega\left(\sqrt{\dh}\right),
    \end{equation}
    each with probability at least $1-\delta/5$, where the constants may depend on $\delta$. 

    Finally, \cref{lem:gaussiansingularvalues} also gives $\|\mathbf{u}_D\|=\Omega\left(\sigma_{\mathbf{u}_D}\sqrt{\dh}\right)$ with probability at least $1-\delta/5$. Union bounding over all five events and substituting into \eqref{eq:sigmamindecomp}, we find
    \begin{equation}
        \sigma_{\min}(\widetilde{W}_\mathrm{aff}) = \Omega\left(\min\Big(\sigma_{W_Q}\sigma_{W_K}\min(\sigma_{U_E},\sigma_{\mathbf{b}_E}),\sigma_{W_V}\sigma_{\mathbf{u}_D}\Big)\cdot\dh\right)
    \end{equation}
    with probability at least $1-\delta$. 

    Putting the two bounds together, we arrive at the overall bound: with probability at least $1-\delta$,
    \begin{equation}
        \|\mathbf{p}_i^*\| \leq 
        \frac{\sqrt{\frac{R^2}{h^4} + \left(\beta+\frac{R^2}{2h^2}\right)^2 
        + \left(S+\sigma_{b_D}\sqrt{2\log(10/\delta)}\right)^2}}
        {\sigma_{\min}(\widetilde{W}_\mathrm{aff})}
        = O\left(\frac{\beta+h^{-2}}{\dh}\right).
    \end{equation}
\end{proof}

\begin{remark}
    Notice the improved prompt norm scaling in the fully random case: we have $\dh$ in the denominator, as opposed to $\sqrt{\dh}$ in the fixed embedding and decoding case. The random versions distribute the constraints on each token densely across $\dh$, which scales up $\widetilde{W}_\mathrm{aff}$ by a factor of $\sqrt{\dh}$ relative to $\widetilde{W}$, whose zero padding embedding and coordinate readout decoding limit the constraints to a few rows. We will see in \cref{thm:univapproxrandomtransformer} that this comes at a cost: the query self-attention is also inflated, which requires a larger logit shift to offset it, impacting the token norm bound (see \cref{rem:fullyrandomtokensize}).
\end{remark}

Using this result gives the following analogous universal approximation theorem in the fully random case. 

\begin{theorem}[Universal approximation for prompting fully random transformer]\label{thm:univapproxrandomtransformer}
    Let $G\in\mathcal{G}_\mathrm{aff}$ be a transformer network whose attention parameters satisfy \cref{ass:randomparams} and whose embedding and decoding parameters satisfy \cref{ass:randomembdec}. Suppose $X$ satisfies \cref{ass:domain} and fix an $f$ satisfying \cref{ass:function}. Suppose the support points and query satisfy \cref{ass:supportpoints} with labels satisfying \cref{ass:noise}. For almost every initialization of the parameters, there exists a map assigning to each realization of samples $\{\mathbf{x}_i,y_i=f(\mathbf{x}_i)+\epsilon_i\}_{i=1}^n$ a soft prompt $P^*$ of length $n$, independent of the query $\mathbf{x}$, such that 
    \begin{equation}
        \E_{\mathbf{x}_i,\epsilon_i,\mathbf{x}} (\widetilde{G}(H_{P^*,\mathbf{x}})-f(\mathbf{x}))^2 = O\left(n^{-\frac{2\alpha}{2\alpha+d_X}}(\log n)^{1+\frac{3d_X}{2}}\right),
    \end{equation}
    conditional on the realized parameters. Furthermore, for any $\delta>0$, with probability at least $1-\delta$ over the transformer weights, the soft prompt tokens satisfy 
    \begin{equation}
        \|\mathbf{p}_i^*\| = O\left(\frac{n^{\frac{2}{2\alpha+d_X}}}{\dh}+\sqrt{\dh}\right)
    \end{equation}
    for all $1\leq i\leq n$, uniformly over the realizations of the samples.
\end{theorem}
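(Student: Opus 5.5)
The proof mirrors that of \cref{thm:univapproxrandomattn}, with \cref{thm:promptingkernelrandomtransformer} replacing \cref{thm:promptingkernelrandomattn}. We use the same four-term decomposition as in \cref{thm:univapprox}: $(\widetilde{G}(H_{P^*,\mathbf{x}})-f(\mathbf{x}))^2\leq 4(\mathrm{I}+\mathrm{II}+\mathrm{III}+\mathrm{IV})$. Terms $\mathrm{II}$, $\mathrm{III}$, $\mathrm{IV}$ are properties of kernel regression alone, so \cref{lem:kernelnoise,lem:kernelvariance,lem:kernelbias} apply verbatim. With $h=n^{-1/(2\alpha+d_X)}$ (and $n$ large enough that $h<\min(1/2,\tau_X/2)$), they give the bounds \eqref{eq:IIbound}, \eqref{eq:IIIbound}, \eqref{eq:IVbound}. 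The dominant one is $O(n^{-2\alpha/(2\alpha+d_X)}(\log n)^{1+3d_X/2})$.

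For term $\mathrm{I}$, we work on the probability-one event of \cref{thm:promptingkernelrandomtransformer}, on which $\widetilde{W}_\mathrm{aff}$ has full row rank and $C_{G,\mathrm{aff}}<\infty$. On this event we choose the adaptive logit shift
\begin{equation}
    \beta=\frac{3R^2}{2}\,n^{\frac{2}{2\alpha+d_X}}+\log_+ C_{G,\mathrm{aff}}.
\end{equation}
The bound of \cref{thm:promptingkernelrandomtransformer} then gives $\sup_{\mathbf{x}}(\widetilde{G}(H_{P^*,\mathbf{x}})-\widehat{\mathcal{K}}_h(\mathbf{x}))^2\leq n^{-2}$ deterministically in the samples. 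This holds because the $\exp(3R^2/h^2)$ factor cancels against the first part of $\beta$, and $C_{G,\mathrm{aff}}^2e^{-2\log_+C_{G,\mathrm{aff}}}\leq 1$. The prompt is the pseudoinverse solution $\mathbf{p}_i^*=\widetilde{W}_\mathrm{aff}^\dagger\mathbf{v}'_i$. It depends only on the samples and the realized weights, not on the query. Summing the four terms gives the stated rate, conditional on the parameters.

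For the norm bound, fix $\delta$ and allocate $\delta/2$ to the event of \cref{thm:promptingkernelrandomtransformer}. On that event $\|\mathbf{p}_i^*\|=O((\beta+h^{-2})/\dh)$. Substituting $h$ and the first part of $\beta$ gives $O(n^{2/(2\alpha+d_X)}/\dh)$. The remaining task, and the only real obstacle, is the extra term $\log_+C_{G,\mathrm{aff}}/\dh$. We spend the other $\delta/2$ on high-probability bounds from \cref{lem:gaussiansingularvalues} and Gaussian/chi tail bounds:
\begin{itemize}
\item $\|U_E\|=O(\sqrt{\dh})$, $\|\mathbf{b}_E\|=O(\sqrt{\dh})$, $\|\mathbf{u}_D\|=O(\sqrt{\dh})$, $\|W_V\|=O(\sqrt{\dh})$;
\item $\|W\|\leq\|W_Q\|\|W_K\|/\sqrt{\dh}=O(\sqrt{\dh})$;
\item $|b_D|=O(1)$.
\end{itemize}
The first factor of $\log C_{G,\mathrm{aff}}$ is then $(R\|U_E\|+\|\mathbf{b}_E\|)^2\|W\|=O(\dh\cdot\sqrt{\dh})=O(\dh^{3/2})$, matching \eqref{eq:randomconstantaffbound}. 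The logarithm of the second factor is only $O(\log\dh)$. Hence $\log_+C_{G,\mathrm{aff}}/\dh=O(\sqrt{\dh})$. A union bound gives $\|\mathbf{p}_i^*\|=O(n^{2/(2\alpha+d_X)}/\dh+\sqrt{\dh})$ with probability at least $1-\delta$, uniformly over the samples, since $\mathbf{v}'_i$ is bounded uniformly in them.

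The care needed here is in tracking that the query self-attention logit scales like $\|\xemb\|^2\|W\|\sim\dh^{3/2}$. This, rather than the $O(\sqrt{\dh})$ of the fixed-embedding case, is what produces the additive $\sqrt{\dh}$ cost. We also need to check that the hidden constants depend only on $\delta$ and the variances, which is routine.
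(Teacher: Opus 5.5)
Your proposal is correct and follows the paper's proof essentially step for step. You use the same four-term decomposition (with \cref{lem:kernelnoise,lem:kernelvariance,lem:kernelbias} for terms II--IV), the same adaptive shift $\beta=\frac{3R^2}{2}n^{2/(2\alpha+d_X)}+\log_+C_{G,\mathrm{aff}}$ giving term I $\leq n^{-2}$, and the same $\delta/2$ split for the norm bound; the only difference is that you explicitly derive $\log_+ C_{G,\mathrm{aff}}=O(\dh^{3/2})$ from the individual Gaussian norm bounds, which the paper merely asserts in \eqref{eq:randomconstantaffbound}.
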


\begin{proof}
    The proof follows the same approach as \cref{thm:univapproxrandomattn}. In the four term decomposition from the proof of \cref{thm:univapprox}, the latter three are bounded independently of the transformer, so still hold here. 

    For term $\mathrm{I}$, we use the above result of \cref{thm:promptingkernelrandomtransformer}:
    \begin{equation}
        \sup_{\mathbf{x}\in X}\left( \widetilde{G}(H_{P^*,\mathbf{x}}) - \widehat{\mathcal{K}}_h(\mathbf{x}) \right)^2 \leq \frac{C_{G,\mathrm{aff}}^2}{n^2e^{2\beta}}\cdot\exp\left(\frac{3R^2}{h^2}\right).
    \end{equation}

    Like we did in \cref{thm:univapproxrandomattn}, we still set $h=n^{-\frac{1}{2\alpha+d_X}}$, but change $\beta$ to absorb the random constant defined in \eqref{eq:randomconstantaff}:
    \begin{equation}
        \beta=\frac{3R^2}{2}\cdot n^{\frac{2}{2\alpha+d_X}} + \log_+C_{G,\mathrm{aff}}.
    \end{equation}
    Substituting these choices, we are left with exactly 
    \begin{equation}
        \sup_{\mathbf{x}\in X}\left( \widetilde{G}(H_{P^*,\mathbf{x}}) - \widehat{\mathcal{K}}_h(\mathbf{x}) \right)^2 \leq \frac{1}{n^2}
    \end{equation}
    almost surely. Assembling the four terms exactly as in \cref{thm:univapprox}, we get our desired bound. 

    We compute our token bound the same way as in \cref{thm:univapproxrandomattn}, but the different $\beta$ changes our conclusion slightly. Fix $\delta>0$; from \cref{thm:promptingkernelrandomtransformer} we have that
    \begin{equation}
        \|\mathbf{p}^*_i\| = O\left(\frac{\beta+h^{-2}}{\dh}\right)
    \end{equation}
    with probability at least $1-\delta/2$. As mentioned in \eqref{eq:randomconstantaffbound}, using standard Gaussian norm bounds we find that with probability at least $1-\delta/2$,  
    \begin{equation}
        C_{G,\mathrm{aff}} = \exp\left(O(\dh^{1.5})\right),
    \end{equation}
    which means 
    \begin{equation}
        \beta = O\left(n^{\frac{2}{2\alpha+d_X}} + \dh^{1.5}\right).
    \end{equation}

    Therefore, with probability at least $1-\delta$ over the initialization of the transformer weights, 
    \begin{equation}
        \|\mathbf{p}_i^*\| = O\left(\frac{n^{\frac{2}{2\alpha+d_X}}}{\dh}+\sqrt{\dh}\right)
    \end{equation}
    uniformly over the realizations of the samples. 
\end{proof}

\begin{remark}\label{rem:fullyrandomtokensize}
    In contrast with \cref{thm:univapproxrandomattn} where adapting the logit shift to the realized attention parameters came at only an additional $O(1)$ cost in token norm, the random embedding and decoding necessitate a larger $O(\sqrt{\dh})$ increase.
\end{remark}

\section{Numerical Experiments}\label{sec:numerical}
In this section, we perform numerical experiments on synthetic datasets to validate our theoretical findings. Our model architectures remain faithful to the settings where our results were derived, using a single softmax attention layer with no feedforward component. Our experiments involve the two architectural variants used in \cref{sec:randomattn,sec:randomembdec} respectively:
\begin{itemize}
    \item \textbf{Architecture A.} One-layer softmax attention with fixed embedding and decoding, exactly as specified in \cref{sec:architecture}. In particular, we use a homogeneous embedding with additional zero padding and a readout decoder. The attention parameters are randomly initialized and not trained.
    \item \textbf{Architecture B.} One-layer softmax attention with variable affine embedding and decoding, as described at the beginning of \cref{sec:randomembdec}. All transformer weights (attention, embedding, decoding) are randomly initialized and not trained. 
\end{itemize}

As noted in \cref{rem:parameterinit}, all random parameters that will not be trained are initialized according to $\mathcal{N}(0,1)$. Only in the baseline comparisons for the regression rate experiment do we use width-dependent initializations. We perform all softmax computations in log-space. All experiments were run over 10 seeds. 

Our domain is the unit sphere $X=S^2\subset\R^3$ with $d_X=2,\din=3$, $R=1$, and $\tau_X=1$. Support points are drawn iid uniformly on $X$. Queries are either also sampled uniformly, or set over a 2,000-point Fibonacci lattice (when testing suprema). The test function is a unit-norm combination of degree-$4$ real spherical harmonics, drawn once by sampling a coefficient vector for the nine basis functions and fixed throughout. We find that the sampled function $f$ has supremum norm $S_f\approx2.6$, computed on a 200,000-point Fibonacci lattice. Spherical harmonics are smooth, in particular $\alpha=1$. Labels are generated with uniform noise $\mathcal{U}([-1,1])$.

\paragraph{Rank and $\sigma_{\min}$ scaling.}
Sweeping over hidden dimension $\dh$, we check that the relevant coefficient matrices, $\widetilde{W}$ and $\widetilde{W}_{\mathrm{aff}}$ (defined in \eqref{eq:wandv} and \eqref{eq:wandvaff} respectively), have minimum singular values which scale as $\Omega(\sqrt\dh)$ and $\Omega(\dh)$ respectively, and are full rank. The results are shown in \cref{fig:singularvaluescaling}.

\begin{figure}[h]
\centering
\includegraphics[width=.9\linewidth]{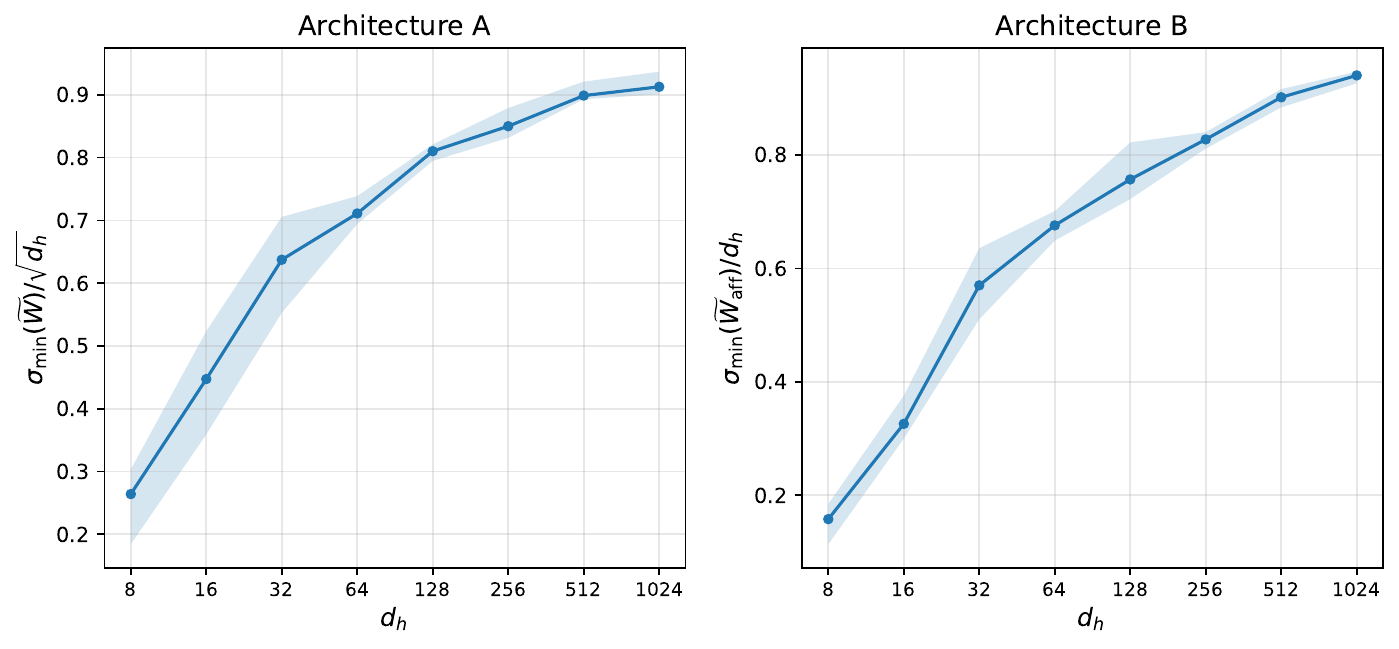}
\caption{Semi-log plot of normalized minimum singular value versus $\dh$ for both architectures. Architecture A normalizes by $\sqrt{\dh}$ and Architecture B normalizes by $\dh$. Solid lines show medians and shaded regions show IQR.}
\label{fig:singularvaluescaling}
\end{figure}

As expected, \cref{fig:singularvaluescaling} exhibits respective normalized minimum singular values concentrating on a positive constant at a shrinking speed. No visible points close to zero supports the almost surely full rank condition.

\paragraph{Kernel approximation.} 
We test the main mechanism of our work: the explicit closed-form construction of a soft prompt to approximate the NW kernel estimator, following \cref{thm:promptingkernelrandomattn}. Across different kernel bandwidths $h$, we observe the scaling in prompt length $n$ and logit shift $\beta$, reporting supremum squared error over a $2000$-point Fibonacci lattice. We test only for Architecture A; the inflated norms of Architecture B cause large fluctuations across seeds which obscure the scaling. The results are shown in \cref{fig:kernelapproxerror}.

\begin{figure*}[h]
    \centering
    \begin{subfigure}[t]{0.45\linewidth}
        \centering
        \includegraphics[width=\linewidth]{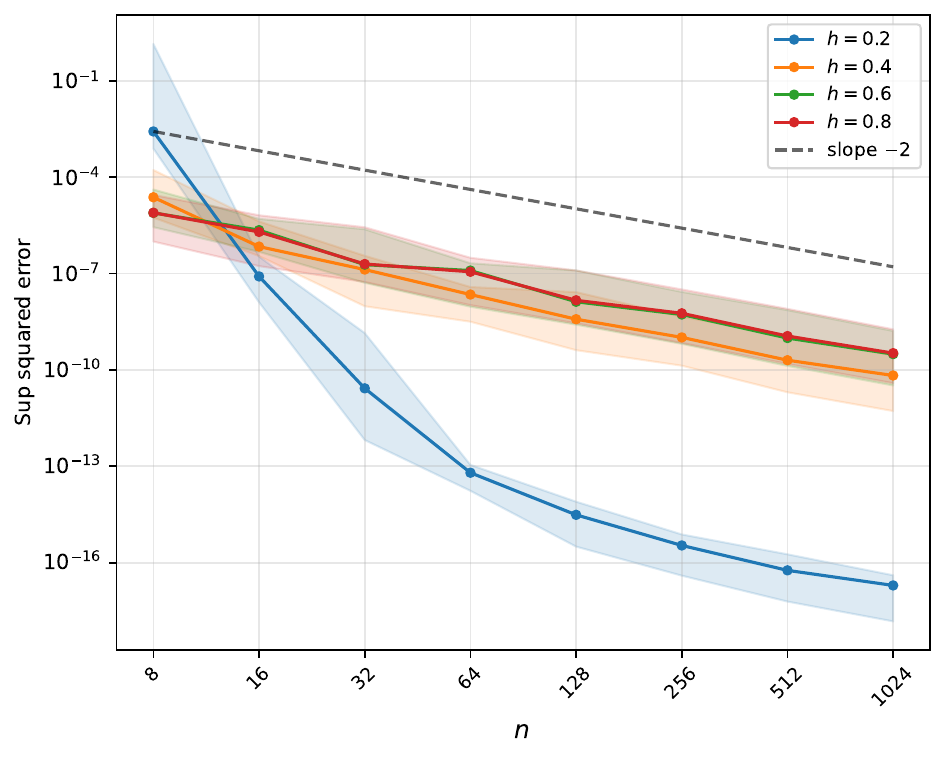}
        \caption{Log-log plot of error versus sample size/prompt length $n$, with fixed $\beta=8$ and $\dh=256$.}
        \label{fig:kernelapproxerrorn}
    \end{subfigure}
    \begin{subfigure}[t]{0.45\linewidth}
        \centering
        \includegraphics[width=\linewidth]{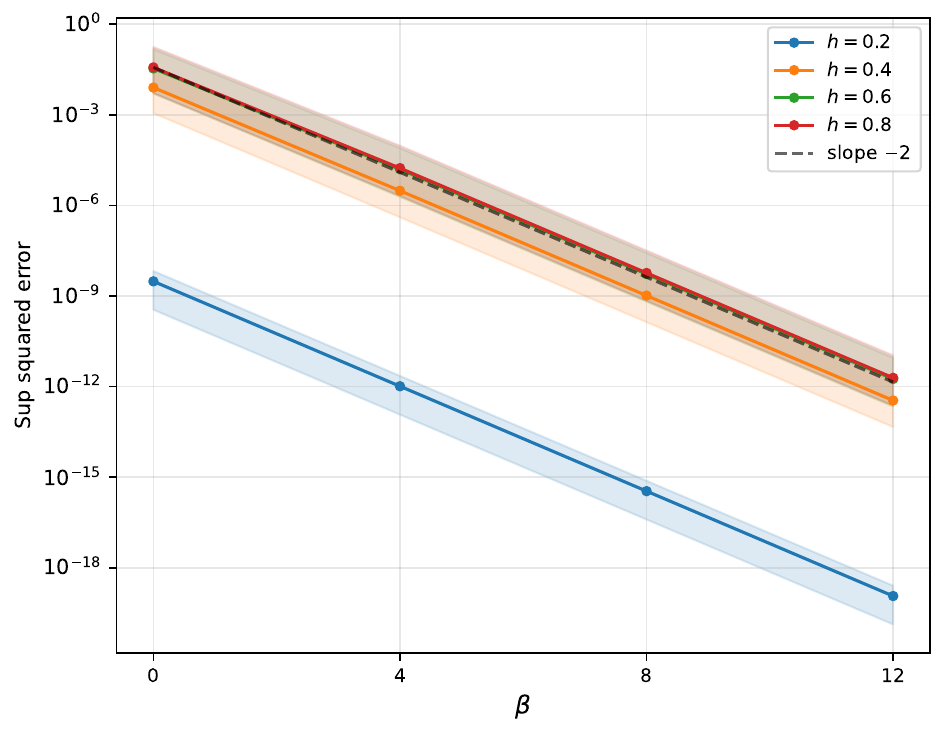}
        \caption{Semi-log plot of error versus logit shift $\beta$, with fixed $n=256$ and $\dh=256$.}
        \label{fig:kernelapproxerrorbeta}
    \end{subfigure}
    \hfill
    \caption{Scaling of supremum squared error between construction and oracle NW estimator across different bandwidths. Solid lines show medians and shaded regions show IQR.}
    \label{fig:kernelapproxerror}
\end{figure*}

\cref{fig:kernelapproxerrorn} supports the predicted scaling in $n$ at the rate $n^{-2}$; we observe an interesting phenomenon for small bandwidth $h=0.2$, where there is an initial steeper drop for small $n$ before aligning with the predicted rate. This is likely attributable to some queries having insufficient local support points in the small $n$ and $h$ regime, so the query self-attention term in the transformer output dominates. In \cref{fig:kernelapproxerrorbeta}, we see the scaling matches almost exactly across all bandwidths (our fixed $n$ is sufficiently large to avoid the aforementioned behavior for $h=0.2$).

\paragraph{Token norm bound.}
We report the largest norm of the constructed soft prompt tokens, with results shown in \cref{fig:tokennormscaling}. Again, we only test Architecture A here. 

\begin{figure}[h]
\centering
\includegraphics[width=.45\linewidth]{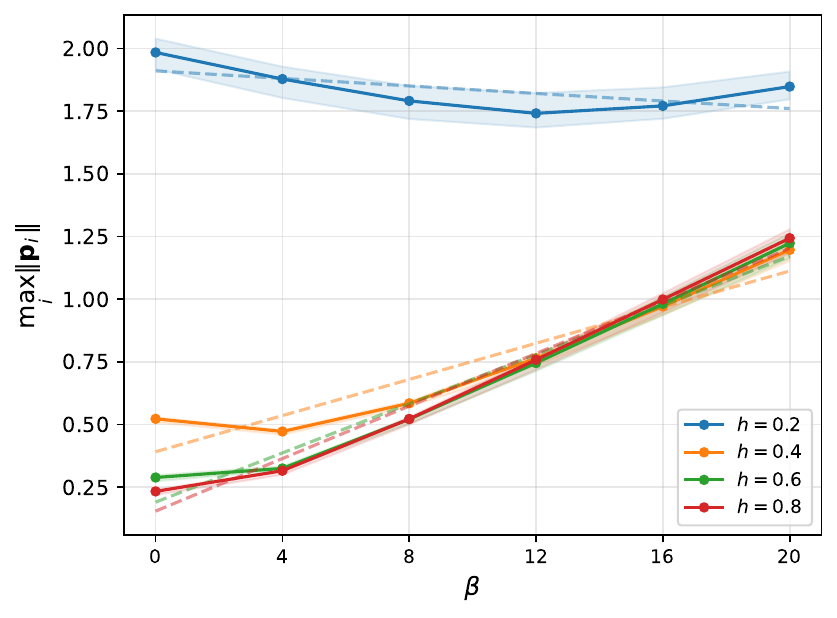}
\caption{Max token norm versus $\beta$ across various bandwidths $h$. Fixed $\dh=256$ and $n=256$. Solid lines show medians and shaded regions show IQR.}
\label{fig:tokennormscaling}
\end{figure}

\newpage

\cref{fig:tokennormscaling} verifies the predicted linear norm growth in logit shift; we again observe nonmonotonic behavior with small bandwidth $h=0.2$, explainable by the cancellation of the particular prompt coordinate $\beta-h^{-2}/2=\beta-12.5$. The other bandwidths demonstrate similar behavior but are not as obvious on the plot (for example, $h=0.4$ dips down at around $4$).

\paragraph{Minimax regression rate.} 
We verify \cref{thm:univapproxrandomattn,thm:univapproxrandomtransformer}, the predicted minimax convergence rate for universal approximation. Exactly as we did in the proofs, we pick $h$ and $\beta$ by explicitly computing the constants $C_G$ and $C_{G,\mathrm{aff}}$, and directly constructing the soft prompts using the pseudoinverses. 

We compare against four benchmarks, the last three of which are architecture dependent (A or B):
\begin{enumerate}
    \item the oracle (empirical) NW estimator;
    \item random frozen weights, trained prompt;
    \item no prompt, trained weights;
    \item frozen random prompt, trained weights.
\end{enumerate}

For all benchmarks, parameters to be trained are initialized with variance $1/\dh$ and trained with Adam; everything else uses the standard variance $1$ initialization. The results are shown in \cref{fig:minimax}. 

\begin{figure}[h]
\centering
\includegraphics[width=0.9\linewidth]{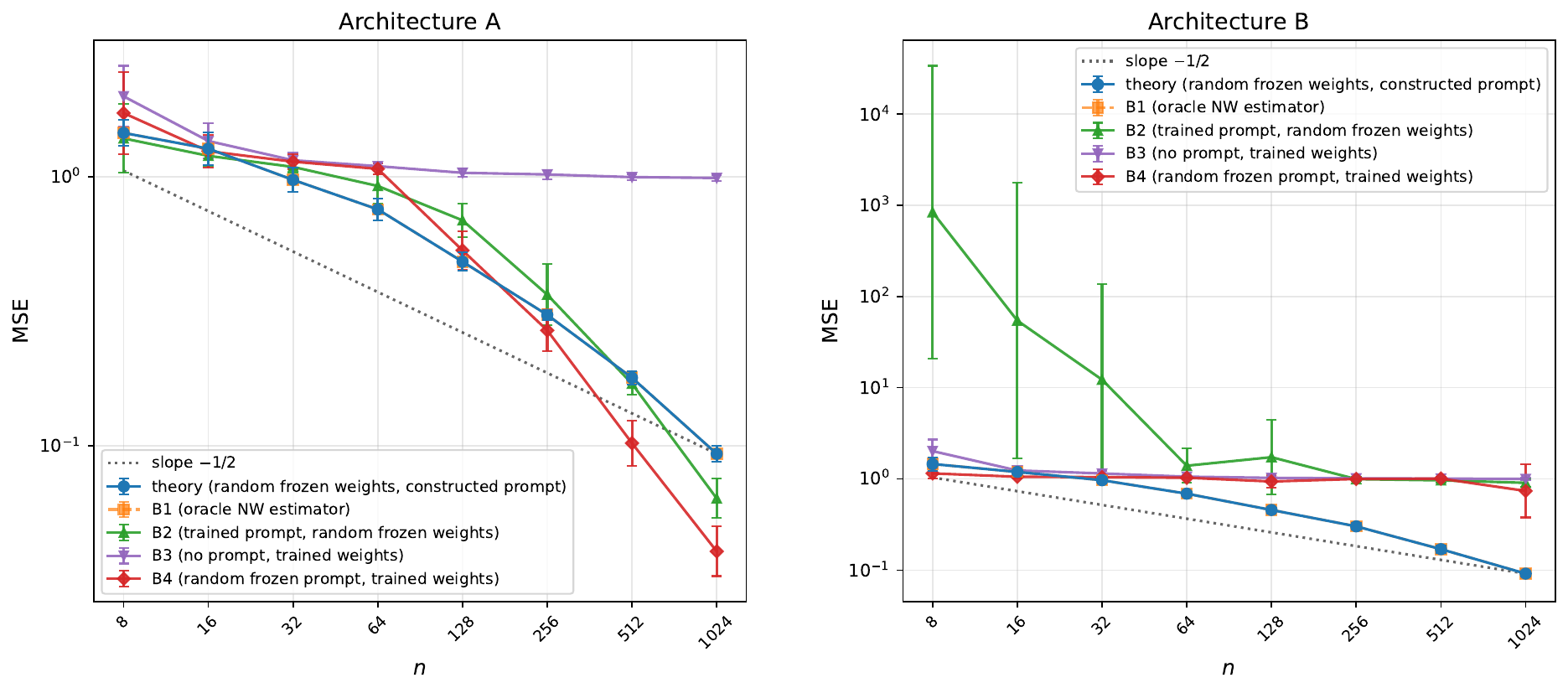}
\caption{Log-log plots of MSE versus $n$ for both architectures and various baselines, with fixed $\dh=256$ throughout. $h$ and $\beta$ are set according to the theory: $h=n^{-1/4}$ and $\beta=3\sqrt{n}/2+\log_+(C_G)$ (or $\beta=3\sqrt{n}/2+\log_+(C_{G,\mathrm{aff}})$ for Architecture B). MSE is computed over $1000$ uniformly sampled test queries per seed. Lines show geometric means and error bars are 95\% Student-$t$ CIs (computed in log-space).}
\label{fig:minimax}
\end{figure}

In nearly all cases, the error is similar for small $n$, since the number of samples is insufficient to accurately capture the function. For both architectures, our closed-form construction of the prompt is virtually indistinguishable from the empirical kernel estimator; comparing with the predicted minimax slope of $n^{-1/2}$ supports the upper bound for sample size/prompt length scaling. Benchmark 3 (weight training alone with no prompt) maintains MSE $\approx 1$ throughout. This is unsurprising, since a single softmax attention head on a single token collapses to an affine map, which has insufficient capacity no matter how it is tuned. 

In Architecture A, Benchmark 2 (frozen weights, trained prompt) shows slightly sharper scaling than our constructed prompts; the construction is sufficient, but likely not tight. Benchmark 4 (frozen prompt, trained weights) performs remarkably well, with seemingly better scaling than Benchmark 2, although not by a visibly significant amount. This highlights the utility of random context as a computational scratchpad for attention, an interesting idea that we leave for future work. 

In Architecture B, Benchmarks 2 and 4 do not learn anything meaningful, with Benchmark 2 even performing worse than the flat Benchmark 3 at smaller $n$. This is almost certainly a numerical failure, since the unit-variance initializations for the frozen components scale the attention logits to a size that is computationally infeasible for gradient-based methods. We still expect our construction not to be tight, and this just shows that there are settings where an explicit construction can outperform direct computational optimization.

\paragraph{Cost of adaptive logit shift.}
Finally, we test the predicted prompt norm tradeoffs in $\dh$, as specified in \cref{thm:univapproxrandomattn,thm:univapproxrandomtransformer}. In particular, we see the impact of the adaptive logit shift between Architecture A, which involves $C_G$, and Architecture B, which instead uses $C_{G,\mathrm{aff}}$. The results are presented in \cref{fig:tradeoff}.

\begin{figure}[h]
\centering
\includegraphics[width=0.9\linewidth]{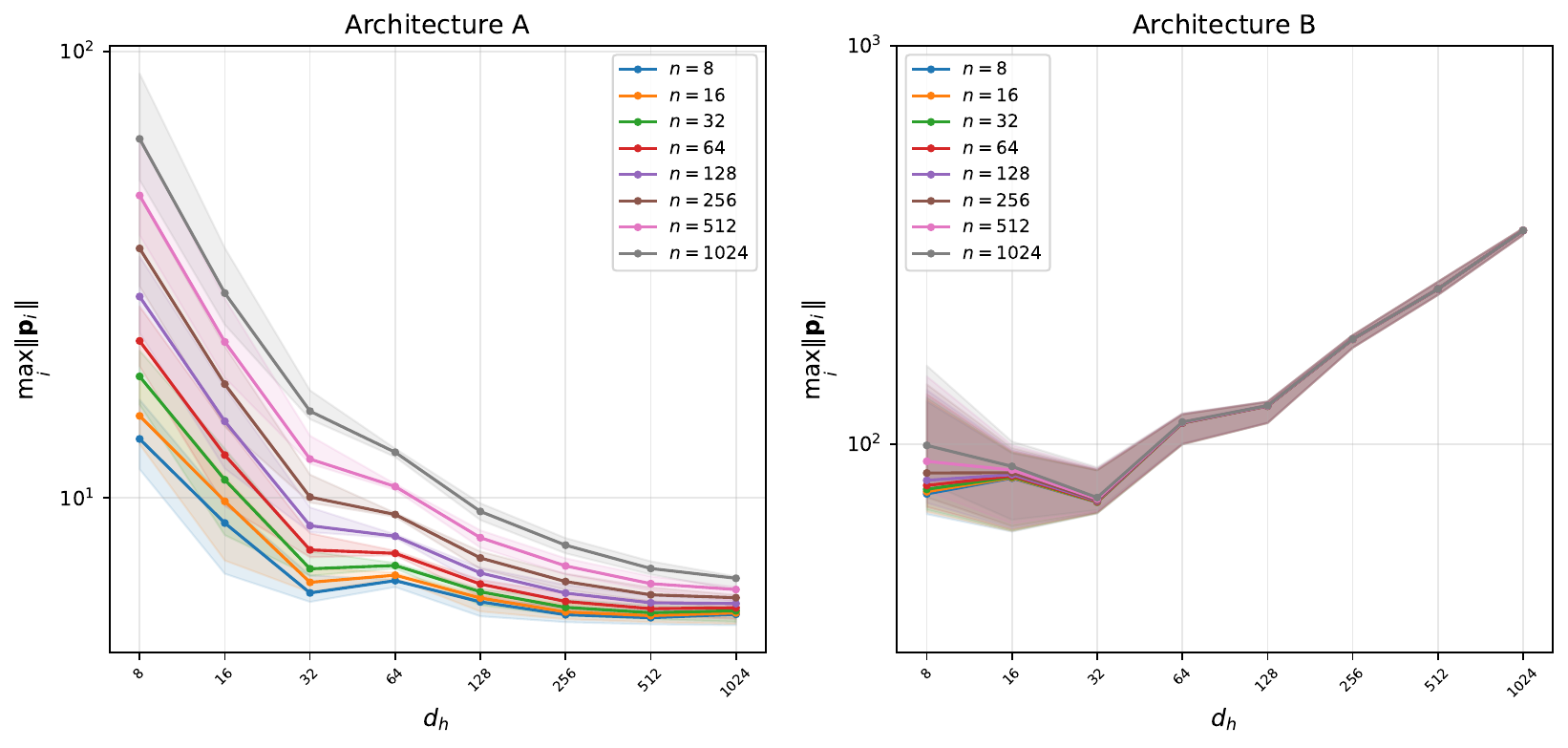}
\caption{Log-log plots of maximum prompt token norm of constructed soft prompt versus $\dh$ for both architectures across various $n$. We use the same explicitly computed choices for $h$ and $\beta$ as in \cref{fig:minimax}. Lines show medians and shaded regions show IQR.}
\label{fig:tradeoff}
\end{figure}

The prompt norm size in Architecture A is essentially monotone decreasing and flattening, aligning with the predicted $O(\dh^{-1/2}+1)$ rate. In the Architecture B result, the rate of $O(\dh^{-1}+\sqrt{\dh})$ manifests as a U-shaped curve, where the size initially decreases before rising linearly after a certain threshold.

\section{Conclusion}
We showed that a single-layer softmax attention network with frozen random weights is a universal approximator under soft prompting: for a H\"{o}lder function on a manifold, we give an explicit construction of a soft prompt, as the minimum-norm solution to a linear system, that steers the network to emulate the Nadaraya-Watson kernel estimator, inheriting minimax-optimal
rates that depend primarily on intrinsic dimension. The required rank condition on the weights holds almost surely under Gaussian initialization. We tracked the required soft prompt norm under three regimes: deterministic attention parameters, random attention parameters, and fully random transformers. 

Our results demonstrate the expressivity of prompting in an extreme case, leveraging the computational power of a single random attention head. The result is based on an explicit construction that depends on the realized transformer, directly connecting softmax attention and kernel methods through the prompt.

\paragraph{Future work}
Our work has focused on a heavily simplified transformer model, which we have proved is sufficient for our purposes. An interesting question is whether we can incorporate other elements of transformers, such as multi-head attention, deeper networks, feedforward components, residual connections, causal masking, and layer norm. In traditional weight-based approximation theory, most additional components only increase expressivity of networks, but it is not entirely obvious if that is also true in the prompting setting with random weights. In terms of the function class, extending to sequence-to-sequence functions is a natural next step. 

Our theoretical bounds were derived with the soft prompt construction and are not tight, as the numerical results suggest. The construction also utilizes one prompt token per support point; would fewer tokens be able to realize a comparable estimator, and is there a lower bound for a given accuracy?

\bibliographystyle{plainnat}
\bibliography{refs}

\end{document}